\DeclareMathOperator*{\argmax}{arg\,max}
\DeclareMathOperator*{\argmin}{arg\,min}
\definecolor{lightred}{rgb}{1, 0.7, 0.7}
\newcolumntype{s}{>{\columncolor[gray]{0.95}} p{.46\textwidth}}
\newtheorem{lemma}{Lemma}
\newtheorem{approach}{Approach}
\newtheorem{definition}{Definition}
\newcounter{formula}
  \def\thefnote{\myfnsymbol{fnote}}}
\def\myfnsymbol#1{\expandafter\@myfnsymbol\csname c@#1\endcsname}
\def\@myfnsymbol#1{\ifcase #1\or $\dagger$\or $\dagger\dagger$\else \@ctrerr\fi}
\def\fntext[#1]#2{\g@addto@macro\@fnotes{%
   \refstepcounter{fnote}\elsLabel{#1}%
   \def\thefootnote{\thefnote}
   \global\setcounter{footnote}{\c@fnote}%
   \footnotetext{#2}}}
\def\ps@pprintTitle{%
  \let\@oddhead\@empty
  \let\@evenhead\@empty
  \let\@oddfoot\@empty
  \let\@evenfoot\@oddfoot
}
\def\fnm#1{\leavevmode\hbox{#1}}%
\def\sur#1{\unskip~\nobreak\leavevmode\hbox{#1}}%
\begin{document}
\begin{frontmatter}
\title{Towards Symbolic XAI -- Explanation Through Human Understandable Logical Relationships Between Features}

\author[1,2]{\fnm{Thomas} \sur{Schnake}\corref{cor1}\fnref{fn1}}
\ead{t.schnake@tu-berlin.de}
\cortext[cor1]{Corresponding author}

\author[1,2]{\fnm{Farnoush} \sur{Rezaei Jafari} \fnref{fn1}}

\author[1,2]{\fnm{Jonas} \sur{Lederer}}

\author[1,2]{\fnm{Ping} \sur{Xiong}}

\author[1,2,3]{\fnm{Shinichi} \sur{Nakajima}}

\author[1,2]{\fnm{Stefan} \sur{Gugler}}

\author[1,6]{\fnm{Grégoire} \sur{Montavon}\corref{cor1}}
\ead{gregoire.montavon@fu-berlin.de}

\author[1,2,4,5]{\fnm{Klaus-Robert} \sur{M\"uller}\corref{cor1}}
\ead{klaus-robert.mueller@tu-berlin.de}

\fntext[fn1]{These authors contributed equally to this work.}

\affiliation[1]{organization={Berlin  Institute  for  the Foundations  of  Learning  and  Data -- BIFOLD},
addressline={10623 Berlin},
country={Germany}}

\affiliation[2]{organization={Machine Learning Group, Technical University of Berlin},
addressline={10623 Berlin},
country={Germany}}

\affiliation[3]{organization={RIKEN AIP},
    city={103-0027 Tokyo},
country={Japan}}

\affiliation[4]{organization={Department of Artificial Intelligence, Korea University},
addressline={Seoul 136-713},
country={Korea}}

\affiliation[5]{organization={Max Planck Institut f{\"u}r Informatik},
addressline={66123 Saarbr{\"u}cken},
country={Germany}}

\affiliation[6]{organization={Department of Mathematics and Computer Science, Free University of Berlin},
adressline={14195 Berlin},
country={Germany}}

\begin{abstract}

Explainable Artificial Intelligence (XAI) plays a crucial role in fostering transparency and trust in AI systems, where traditional XAI approaches typically offer one level of abstraction for explanations, often in the form of heatmaps highlighting single or multiple input features. However, we ask whether abstract reasoning or problem-solving strategies of a model may also be relevant, as these align more closely with how humans approach solutions to problems. We propose a framework, called Symbolic XAI, that attributes relevance to symbolic queries expressing logical relationships between input features, thereby capturing the abstract reasoning behind a model’s predictions. The methodology is built upon a simple yet general multi-order decomposition of model predictions. This decomposition can be specified using higher-order propagation-based relevance methods, such as GNN-LRP, or perturbation-based explanation methods commonly used in XAI. The effectiveness of our framework is demonstrated in the domains of natural language processing (NLP), vision, and quantum chemistry (QC), where abstract symbolic domain knowledge is abundant and of significant interest to users. The Symbolic XAI framework provides an understanding of the model’s decision-making process that is both flexible for customization by the user and human-readable through logical formulas.

\end{abstract}
\begin{keyword}Explainable AI \sep Concept Relevance \sep Higher-Order Explanation \sep Transformers \sep Graph Neural Networks \sep Symbolic AI \end{keyword}

\end{frontmatter}

\section{Introduction}
Machine learning (ML) algorithms have increasingly become part of everyday life in both private and public sectors, playing crucial roles in data analysis, prediction, and automation. However, alongside the growing adoption of ML algorithms, there are increasing concerns about the potential risks associated with these models \cite{bengio2023managing,EU2021aiact}. A key step towards creating safer and trustworthy artificial intelligence (AI) systems is to understand their underlying decision-making strategies, particularly whether they align with human expectations on how problems should be tackled at an abstract level \cite{Panigutti2023XAIinAIAct, zhuang2020misalignedAI, Bommasani2021FoundationModels, stray2020aialignment,sucholutsky2023getting}. Since most models are large, highly nonlinear, and involve complex feature interactions, their predictions tend to remain opaque unless equipped with explanation capabilities. 

In this work, we address the case where the model is not interpretable by design, and interpretability is acquired by means of so-called `post-hoc' explanation strategies. In recent years, many different post-hoc explanation methods have been proposed, encompassing first-order \cite{baehrens2010explain,bach-plos15,lund17unified,sundar17axiomatic, samek2016evaluating}, second-order \cite{eberle22bilrp, Lund2020shapInterVal, janizek2021IntHess, ying19gnnexplainer}, and higher-order \cite{schnake22gnnlrp, Kedar2020shapTaylor} feature interactions. Notably, previous research has mostly focused on first-order explanation methods, which have been applied successfully across many disciplines (see e.g. \cite{keyl2022patient,keyl2023single,klauschen-pathology24,schuett2017dtnn,schutt2018schnet,eberle2023insightful,arras-lncs19,hense2024xmil,VielhabenPR24virt_insp_layer,becker2024audiomnist, jafari2024mambalrp}). 

\begin{figure*}
    \centering
    \includegraphics[width=.975\textwidth]{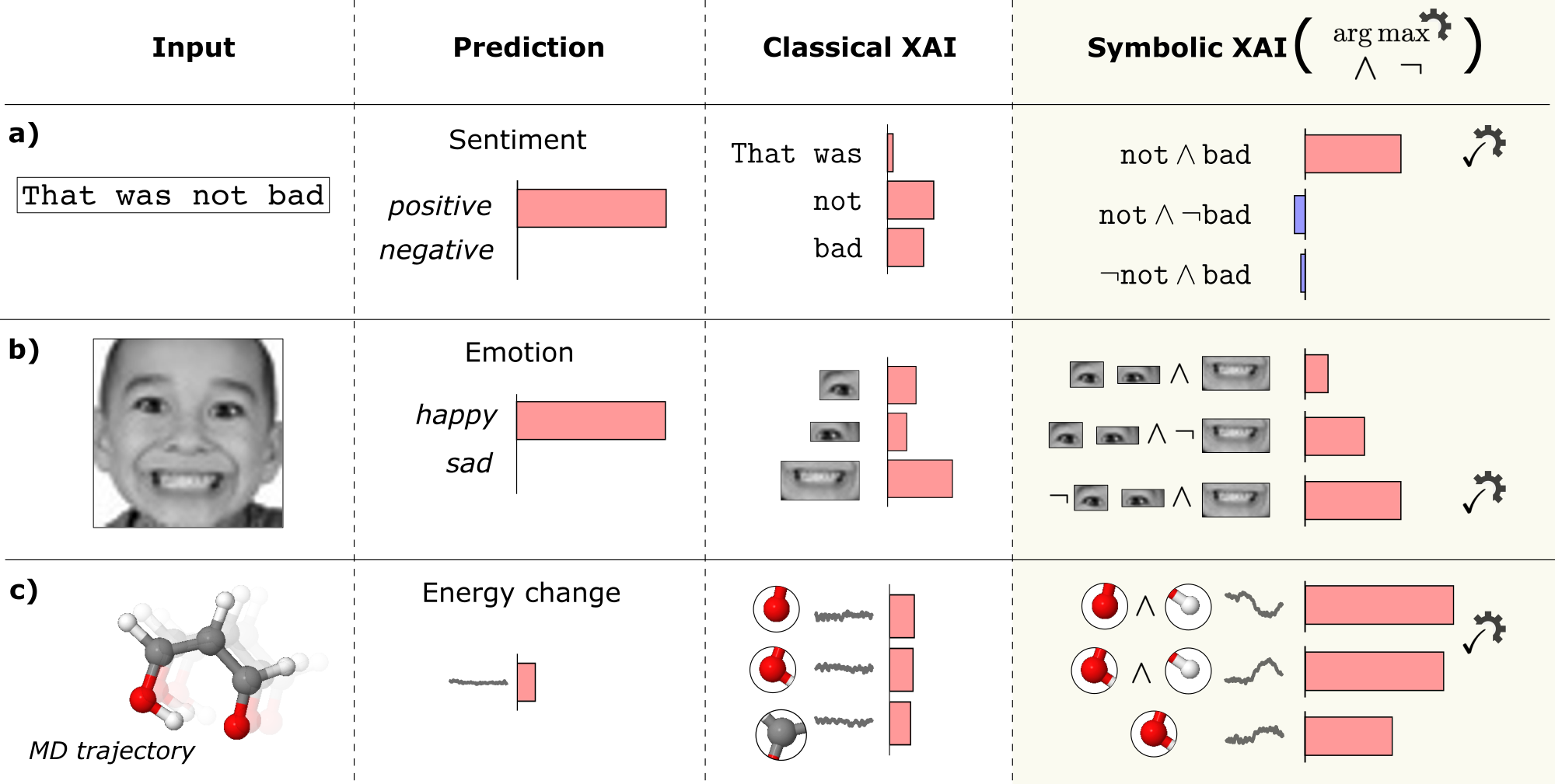}
    \caption{Schematic overview of how Symbolic XAI explains the predictions of an ML model.
    We show the input data, the model’s prediction, the output of a classic explanation method (layer-wise relevance propagation in this case), and the result from the Symbolic XAI framework for three use cases:
    \textbf{a)}, the sentiment prediction of a sentence is shown; 
    \textbf{b)}, the facial emotion prediction from an image; 
    and \textbf{c)}, the energy fluctuation over a molecular dynamics  trajectory. 
    The check mark in the column for Symbolic XAI denotes the selected query with the highest relevance.
    To measure the energy variation in \textbf{c)}, we determine the difference between the mean energy of two metastable states in the trajectory (see Section \ref{sect:qc_experiments}). \textbf{c)} provides the most relevant atoms and queries in the Classical XAI and Symbolic XAI column, respectively. For Classical XAI, we consider the relevance of all atoms, while for Symbolic XAI, we consider all conjunctive atom interactions with a maximum order of four.}
    \label{fig:intro_exemplary_example}
\end{figure*}

For inference, ML models typically do not rely on individual input features, such as a single image pixel or a word in a sentence. Instead, they often rely on complex interactions between various input features to make predictions. This complexity necessitates the development of more abstract explanations to represent these intricate relationships. 
Such explanations would offer insights that extend beyond individual feature contributions, providing a more holistic view of how different feature interactions influence the model's predictions.
Attempts to provide such abstract explanations, which go beyond feature-wise explanations, have been proposed recently in \cite{achtibat23concept_rel,chormai24relsubspace,CIRAVEGNA2023103822}. The goal is to provide explanations that are more human-understandable and closely aligned with human intuition regarding problem solving strategies. The human-understandable explanatory features range from concepts that the model has learned \cite{achtibat23concept_rel} to activation patterns that are relevant for the prediction \cite{chormai24relsubspace}. 

Another approach to generating abstract explanations is to attribute relevance to the \emph{logical relationships} between features, rather than to the features themselves. This type of abstraction is intuitive and human-readable, as it mirrors the human tendency to reconsider the logical reasoning behind a prediction. For instance, when we read a sentence, we understand its meaning not just through individual words but also through the grammatical rules that govern their combination. To build trust in the model, an explanation method should be able to recover such grammatical rules, reflecting the underlying logical structure of the model’s prediction strategy.

There exist already different explanation methods that attribute the relevance of logical relationships include logical conjunctions $(\wedge)$ \cite{janizek2021IntHess,Kedar2020shapTaylor,eberle22bilrp, concept_interactions_1, concept_interactions_2} and logical disjunctions $(\vee)$ \cite{xiong22asubgraph}. Additionally, Some ML models are designed to encode these logical relationships, making them inherently more interpretable \cite{DIAZRODRIGUEZ2022XNESYL, CIRAVEGNA2023103822}. However, there is no post-hoc explanation framework that attributes relevance to logical formulas that are composed of a \emph{functionally complete} set of logical connectives, which express any logical relationship between features.

We propose an explanation framework that specifies relevance values for both conjunctive relationships ($\wedge$) between input features and the absence of features ($\neg$). This enables us to express all possible logical relationships between features. These logical formulas, which we refer to as \emph{queries}, are configurable and can be tailored to answer any question a user might have about the model's predictions. Furthermore, we provide a method for automatically generating queries that best describe the model’s prediction strategy.

Our framework relies on decomposing the model's prediction into the relevance values of different feature subsets. This decomposition can be defined either as a propagation-based \cite{bach-plos15, montavon-pr17, schnake22gnnlrp} or a perturbation-based \cite{lund17unified, Lund2020shapInterVal, blucher2024flipping} approach.
The relevance value of a logical formula (query) is computed by filtering the feature subsets where the query holds true and summing their corresponding relevance values. To identify the query that most accurately represents the model's prediction strategy, we search for the query, represented as a binary vector indicating the truth values for feature subsets, that most closely matches the model decomposition. Since the resulting explanatory features are composed of symbolic expressions, we refer to our novel framework as \emph{Symbolic XAI}.

\medskip
Figure \ref{fig:intro_exemplary_example} previews the insights that we can gain about a model’s decision-making strategies using Symbolic XAI. 
An in-depth analysis of these experiments is provided in Section \ref{sect:showcases}.

In Figure \ref{fig:intro_exemplary_example}\textbf{a)}, we analyze the sentence ``That was not bad'', which has a positive sentiment. Classical XAI, specifically first-order relevance values achieved using Layer-wise Relevance Propagation (LRP), provides high relevance values for the words ``not'' and ``bad''. In contrast, Symbolic XAI assigns high relevance values to the conjunctive relationship between ``not'' and ``bad''. However, when evaluating the queries ``not$\wedge \neg$bad'' and ``$\neg$not$\wedge$bad'', which measure the relevance of ``not'' and ``bad'' in the absence of their counterpart, both have low relevance values. This shows that the words ``not'' and ``bad'' marginally influence the prediction without their counterpart, which is a detail that classic XAI fails to reveal.

In Figure \ref{fig:intro_exemplary_example}\textbf{b)}, we see that Classical XAI identifies the mouth as having a high contribution to the prediction, while the eyes are considered to be less important. Symbolic XAI reveals that the interaction between the mouth and eyes are less important, whereas the mouth alone, in the absence of the eyes, has a higher contribution. This suggests that the model focuses primarily on the first-order concepts and disregards the interactions between them. Without using Symbolic XAI, this information remains elusive.

When considering the relevance values of single atoms in \ref{fig:intro_exemplary_example}\textbf{c)}, the oxygen atoms have the highest variation compared to all other atoms. Despite this variation, no clear trend is visible. In Symbolic XAI, we observe that the pairwise interactions between the oxygen and hydrogen atoms play a crucial role. This result aligns with chemical intuition, as hydrogen moves from one oxygen atom to the other during the reaction. This demonstrates the flexibility of constructing various logical formulas within the Symbolic XAI framework and obtaining their relevance for the prediction. Such a result would not be possible with a rigid explanation method that only considers a fixed set of explanatory features.
\medskip

In summary, the contributions of this paper are:
\begin{enumerate}
\item We introduce a novel Symbolic XAI framework, designed to compute the relevance values of logical formulas composed of a \emph{functionally complete} set of logical connectives for the model's prediction.
This framework is integrated into existing explanation approaches, specifically perturbation-based or propagation-based methods. 

\item We propose a query search strategy aimed at identifying the most expressive queries that accurately reflect the decision-making processes of the models.

\item We demonstrate the practical usefulness of our framework across three different application domains, providing diverse use cases, such as sentiment analysis, facial expression recognition (FER), and molecular energy prediction in quantum chemistry.
\end{enumerate}

\section{Background and Related Work}\label{sect:back_rel}
Throughout this manuscript, we assume that we are given an ML model $f$, which takes $\bm{X}= (x_I)_{I \in \mathcal{N}}$ as input to obtain the prediction $f(\bm{X})$. Here, $\mathcal{N}$ is the set of feature indices and the model $f$ maps the input $\bm{X}$ to a scalar value.
In the following, we outline several families of explanation methods.

\subsection{From first- to higher-order feature relevance}

\paragraph{First-order feature relevance}
First-order explanation methods consider the impact of individual features on the model's predictions \cite{ali2022xaiTrans, bach-plos15,montavon-pr17}.  For each feature index $I \in \mathcal{N}$, we obtain a relevance value $R_I$ to quantify the contribution of $x_I$ to the prediction $f(\bm{X})$. Popular first-order feature methods include Layer-wise Relevance Propagation (LRP) \cite{bach-plos15,montavon-pr17,montavon2018methods,samek2021xaireview}, Integrated Gradients (IG) \cite{sundar17axiomatic}, SHAP \cite{lund17unified}, and others \cite{zeiler2014occlusion,BLUCHER2022PredDiff,ribeiro2016trust, pope2019gcnxai}. 

In \cite{lund17unified}, many of these methods are classified as additive explanation methods, where the sum of feature contributions reconstruct the prediction, i.e., $\sum_I R_I = f(\bm{X})$. 

In our framework, it is also possible to obtain the relevance values of individual features by formulating them as a logical formula. In fact, we observe that some first-order explanation methods, such as Occlusion Sensitivity \cite{BLUCHER2022PredDiff, zeiler2014occlusion} and SHAP \cite{lund17unified}, are special cases of our methodology.

\paragraph{Second-order feature relevance}
Second-order explanation methods consider the impact of pairwise feature interactions on the model’s predictions. These feature interactions can either be represented as tuples of two features $(I, J) \in \mathcal{N} \times \mathcal{N}$, where $\times$ is the Cartesian product, or as sets of two features $\{I, J\} \subseteq \mathcal{N}$. The difference between these representations is that for tuples, the order of the features matters \cite{ying19gnnexplainer, faber21evalGNNXAI}, whereas for sets, it does not \cite{janizek2021IntHess, Lund2020shapInterVal, eberle22bilrp}.

In both cases, we obtain a value $R_{IJ}$, which specifies the relevance of the features $I$ and $I$ to the prediction $f(\bm{X})$. The interpretation of $R_{IJ}$ varies by applications. For instance, in BiLRP \cite{eberle22bilrp}, $R_{IJ}$ expressed by design the similarity between pixels in a similarity model $f$, and can be extended to other applications (e.g., \cite{keyl2022patient,eberle2023insightful}).
When explaining graph neural networks, $R_{IJ}$ describes the relevance of edges in the input graph \cite{ying19gnnexplainer, faber21evalGNNXAI}. In other contexts, the relevance value $R_{IJ}$ quantifies the joint contribution of the features $x_I$ and $x_J$ to the prediction $f(\bm{X})$ \cite{janizek2021IntHess, Lund2020shapInterVal}.

Some of these methods are considered additive \cite{eberle22bilrp, Lund2020shapInterVal, janizek2021IntHess}, meaning that the sum of the relevance values of feature pairs equals the model’s prediction. For sets of two features, this is expressed as $\sum_{I, J \in \mathcal{N}: I < J} R_{IJ} = f(\bm{X})$. 

In our framework, we can also attribute relevance values to sets of two features expressed by logical formulas, specifically using a logical conjunction $(\wedge)$ between the features.

\paragraph{Higher-order feature relevance}
Higher-order explanation methods aim to explain how interactions among multiple features collectively influence a model’s predictions, which generalizes the notion of second-order feature relevance. As before, this class of explanation methods can be subdivided into two types: \textit{sequences of features} \cite{schnake22gnnlrp} and \textit{feature sets} \cite{Kedar2020shapTaylor, garbis1999shapInter, FUJIMOTO2006AxiomInterIndex, BLUCHER2022PredDiff}. In the case of sequences of features, we consider the relevance score $R_\mathcal{W}$ for a fixed-length sequence $\mathcal{W} = (I, J, \dots) \in \mathcal{N} \times \mathcal{N} \times \dots$. For feature sets, we consider the relevance score $R_\mathcal{S}$ for each set of feature indices $ \{I, J, ...\} = \mathcal{S} \subseteq \mathcal{N}$. The distinction is that for sequences, the order of the features matters, whereas for sets, it does not.

Higher-order explanation methods find applications in various domains, such as understanding infection chains in disease prediction \cite{xiong22asubgraph} or analyzing how chains of atoms contribute to specific properties of molecules \cite{xiong22asubgraph, schnake22gnnlrp}.

Some of these methods are additive \cite{schnake22gnnlrp,Kedar2020shapTaylor,FUJIMOTO2006AxiomInterIndex}. For instance, in \cite{schnake22gnnlrp}, the sum over all feature sequences equals the model’s prediction, i.e., $\sum_{\mathcal{W} \in \mathcal{N} \times \mathcal{N} \times \dots} R_\mathcal{W} = f(\bm{X})$. Similarly, in \cite{Kedar2020shapTaylor}, the sum over all sets of a fixed size $k$ equals the model’s prediction, i.e., $\sum_{\mathcal{S} \subseteq \mathcal{N}: |S| = k} R_\mathcal{S} = f(\bm{X})$. 

Note that in the Symbolic XAI framework, we can also attribute relevance values to higher-order interactions within a set of features by composing a logical formula that describes a logical conjunction $(\wedge)$ between the features in the set.

\subsection{Explainable AI beyond first-, second-, and higher-order feature relevance}
A branch of research dating back to the 1990s, known as \textit{rule extraction algorithms} for deep neural networks \cite{thrun1994rule_extr_distr_rep, tsukimoto200rule_extract_NNs, hailesilassie2016rule_extract_xai_review}, aims to summarize the decisions made by neural networks using rule-based algorithms, offering potentially greater interpretability. For example, in \cite{tsukimoto200rule_extract_NNs}, the authors approximate different parts of a neural network using Boolean functions and then express the entire network as a logical formula. In Section \ref{sec:automatization}, we also present logical formulas that articulate the model’s prediction strategy. However, rather than approximating the components of the model, we propose a search algorithm that identifies logical formulas that are comparable to a specific decomposition of the model's prediction.

In reinforcement learning, a fundamental principle is that the model employs a strategy (policy) to determine an action \cite{sutton2018RLintro}. In \cite{Kersting2023XSymbAI}, these policies are represented as symbolic abstractions, which correspond to the contextual information on which decision are based. Since these abstractions are human readable, this approach increases the interpretability of these models. While the use of symbolic abstractions as explanatory features is similar to our framework, our approach applies to any ML model and do not consider reinforcement learning. 

Neural-symbolic learning is an interdisciplinary approach that combines neural networks and symbolic AI to address complex problems that require both symbolic reasoning and statistical learning \cite{garcez2015neural_symb,artur2002neural_symb}. In \cite{DIAZRODRIGUEZ2022XNESYL}, the authors attempt to explain these models for a better interpretation of their decisions. Unlike our framework, which is applicable to any neural networks, their method is restricted to specific neural-symbolic model architectures.

In \cite{CIRAVEGNA2023103822}, the authors propose a model architecture that is able to explain its predictions by first-order logical formulas. The visual format of their explanatory logical formulas is similar to the queries used in our Symbolic XAI approach. However, their explanation framework is specific to a particular model architecture, whereas ours is more versatile and not tied to any specific architecture.

In \cite{concept_interactions_3}, it has been shown that the output score of a deep neural network can be decomposed into the effects of several interactive concepts. Each interactive concept represents a conjunctive relationship $(\wedge)$ between a set of input features. The authors further extend the conjunctive interaction to include the disjunctive interaction $(\vee)$. The interactive concepts are then used in \cite{concept_interactions_1} and \cite{concept_interactions_2} to derive optimal baseline values of input features and to analyze the generalization power of deep neural networks, respectively. It is known that the logical connectives $\wedge$ and $\vee$ are not functionally complete \cite{andrews1986math_logic_type}. In contrast, our framework employs the connectives ${\wedge, \neg}$, achieving functional completeness.

The authors in \cite{achtibat23concept_rel} propose a propagation-based method to identify specific parts of a neural network that correspond to particular concepts, such as ‘eyes’ or ‘snout’ in an image of a dog. The method generates heatmaps for these model-specific concepts, offering a more interpretable way to understand the model’s decision-making process. This approach is similar to our work in two ways:
First, for computing the relevance of a concept, they mask particular neurons in the model during propagation that correspond to that concept. Similarly, we mask different neurons in the propagation-based approach to obtain a decomposition of the model’s prediction (see Section \ref{subsect:decompo_model}).
Second, they provide a novel abstraction of the explanatory features by considering the relevance of concepts, while we attribute relevance to logical formulas.

In \cite{chormai24relsubspace}, the authors describe a technique for extracting relevant subspaces that represent different factors influencing the model’s predictions. These subspaces are disentangled from each other, providing insights into the complexity of the prediction task. While their approach focuses on abstract explanatory features through relevant subspaces, our approach considers the relevance of the abstraction of logical formulas to understand the model’s predictions.

\section{Our Approach}\label{sect:symbxai_framework}
In this section, we introduce Symbolic XAI (Symb-XAI), an explanation framework that attributes relevance values to logical formulas, also referred to as queries, which express different logical relationships between input features.

Throughout the rest of this manuscript, we abbreviate the notations for sets, i.e., omit the braces $\{ \}$ for sets with only a few elements, such as $\{ I \}$ or $\{ I,J \}$, and instead write $I$ or $IJ$, respectively. We also recall the specification of the ML function $f$ and its input sample $\bm{X} = (x_I)_{I \in \mathcal{N}}$ from Section \ref{sect:back_rel}. Other notations used in this work are summarized in Table \ref{tab:notation} in the Appendix.

 \begin{figure*}[ht]
    \centering
    \includegraphics[width=.9\textwidth]{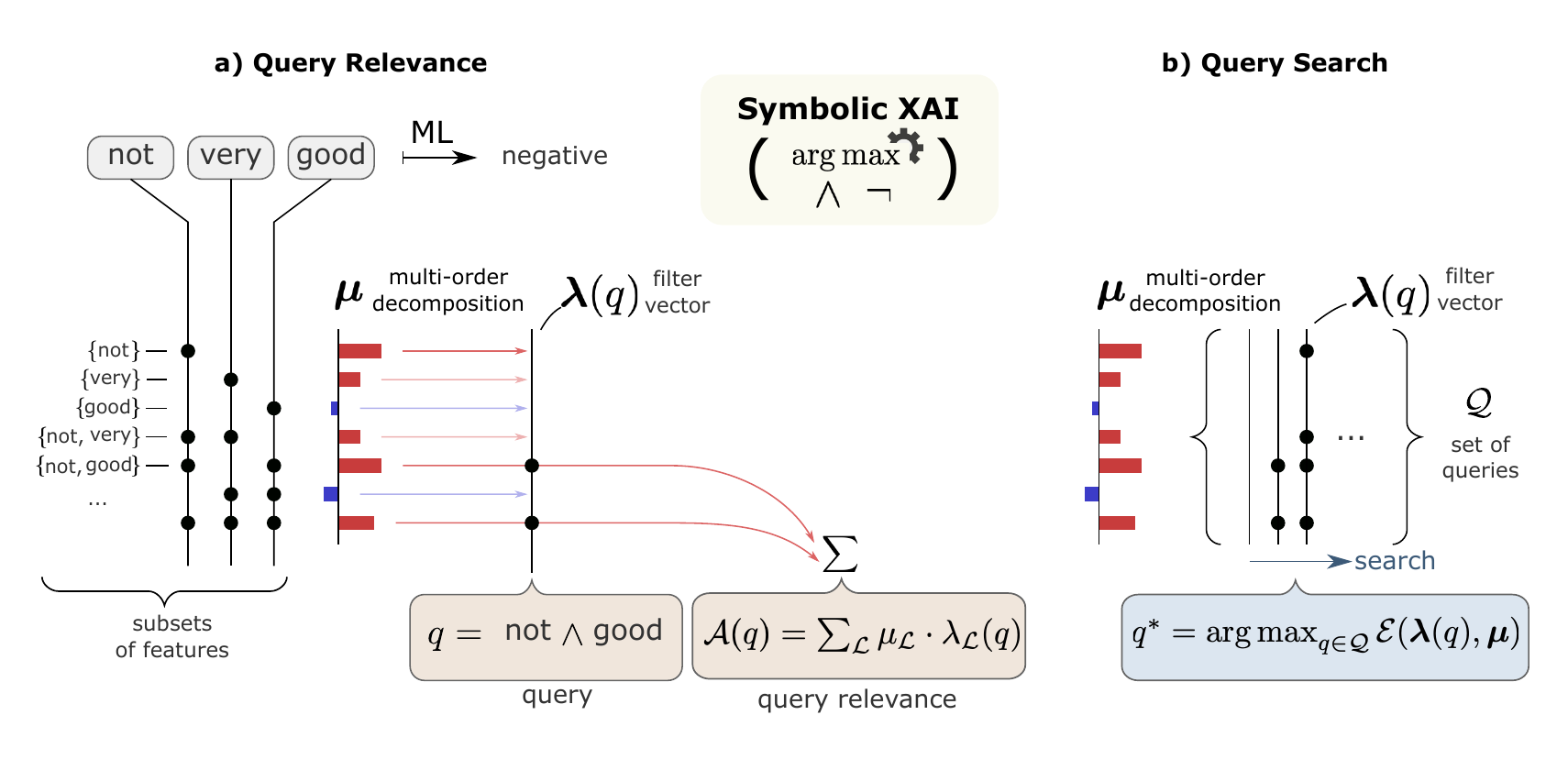}
    \caption{ Overview of the Symbolic XAI framework, illustrated to explain the prediction of a sentiment classification model.
    In \textbf{Query Relevance}, we describe how the relevance of a query to the model’s prediction is obtained. Below the input sentence ``not very good'', we explicitly visualize all subset of words, or subsets of features, and indicate the presence of each word in the set with $\bullet$. Two different objects depend on the subsets of features: The multi-order terms $\bm{\mu}$, and the filter vector  $\bm{\lambda}(q)$. The multi-order terms represent a decomposition of the model's prediction in terms of positive or negative scalar values for each subset of features, visualized here in a histogram. The filter vector is a vectorized representation of the query, or logical formula, indicating by $\bullet$ whether the query is true on the subset of feature at the same level in the image. For the query `not$\wedge$good’, the filter vector is true for subsets where both `not' and `good’ are present, specifically the sets $\{\text{not}, \text{good}\}$ and $\{\text{not}, \text{very}, \text{good}\}$. The query relevance $\mathcal{A}(q)$ is then obtained by summing the multi-order terms $\bm{\mu}$ for which the filter vector $\bm{\lambda}(q)$ is true.
    In \textbf{Query Search}, we describe how to search for queries that best express the model’s prediction. We consider a set of queries $\mathcal{Q}$, with their corresponding filter vectors $\bm{\lambda}(q)$. To find the suitable query $q^*$, we aim to maximize $\mathcal{E} (\bm{\lambda}(q), \bm{\mu})$ for any $q \in \mathcal{Q}$. $\mathcal{E}$ is generally a similarity measure between the filter vector $\bm{\lambda}(q)$ and the multi-order terms $\bm{\mu}$; in our case, it will be the correlation.
    }
    \label{fig:meth_descr}
\end{figure*}

Previous studies, as discussed in Section \ref{sect:back_rel}, explored computing the joint relevance of features for the prediction, which corresponds to the conjunctive relationship ($\wedge$) between features. Methodologically, there is a straightforward way to calculate such relevance. Assume a function $\mathcal{A}$ that measures the relevance values of sets of single features $I$ and $J$, as well as sets of feature pairs $ IJ $. In order to extend $\mathcal{A}$ to measure the relevance of the conjunction between two features, $I \wedge J$, we can utilize the  \emph{inclusion-exclusion principle} \cite{ryser1963combinatorial}:
\begin{align}\label{eq:inclu_exclu_princ}
        \mathcal{A}(I \wedge J) = \mathcal{A}(I) +  \mathcal{A}(J) -  \mathcal{A}(IJ)
\end{align}

Our goal is to extend the relevance function $\mathcal{A}$ to any logical formula that expresses logical relationships between features. Specifically, we aim to include the absence of features, described by the logical negation $(\neg)$. By combining this with the logical connective $\wedge$, we can attribute relevance to any logical formula, since $\wedge$ and $\neg$ are known to be functionally complete \cite{andrews1986math_logic_type}.

We base our framework on a decomposition of the model's prediction into detailed components (multi-order terms), a common approach in cooperative game theory \cite{FUJIMOTO2006AxiomInterIndex,garbish2000EquiRepSet}. The decomposition of a model's prediction and the different approaches for specifying it are discussed in Section \ref{subsect:decompo_model}. Moreover, we provide a general definition for attributing relevance values to logical queries, i.e., extending the relevance function $\mathcal{A}$ beyond Equation \eqref{eq:inclu_exclu_princ}, as outlined in Section \ref{sect:attribute_queries}. To gain a comprehensive understanding of which logical relationships are relevant for the model's prediction, we develop a search method to identify the queries that best describe the underlying prediction strategies. This is discussed in Section \ref{sec:automatization}.

\begin{figure*}
    \centering
    \includegraphics[width=.9\textwidth]{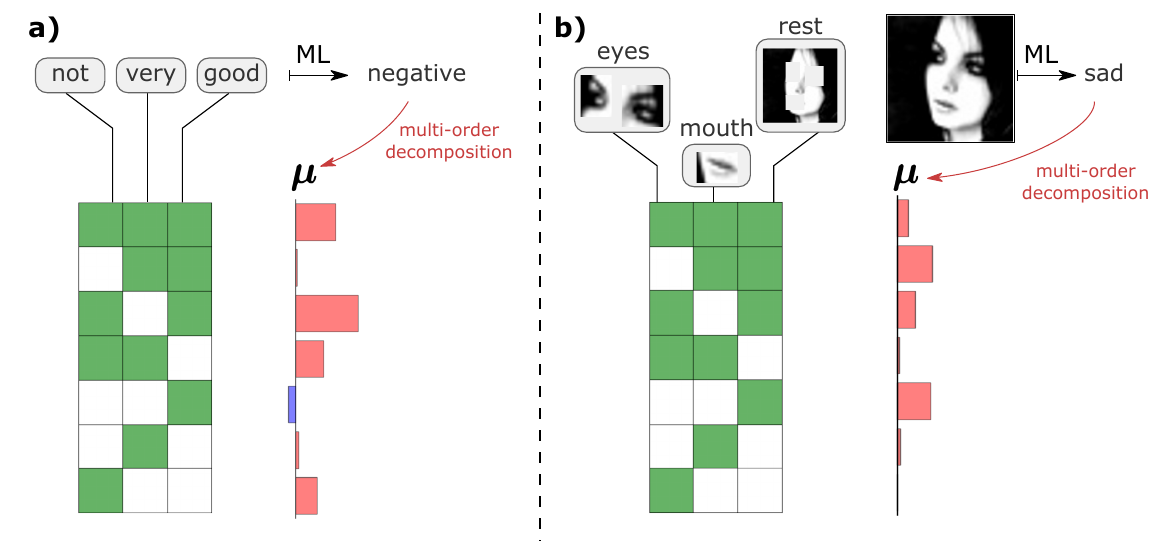}
    \caption{ The figure provides two exemplary multi-order decompositions of the model’s prediction; one for the sentiment classification task and one for the FER task, represented in subfigures \textbf{a)} and \textbf{b)}, respectively. Each subfigure provides the model’s input and its prediction. In both images, we use the definition of the multi-order decomposition $\bm{\mu}$ given by the propagation-based approach in Equation \eqref{eq:multi_order_gnnlrp}.
    In \textbf{b)}, the image is segmented into (super-) tokens of `eyes', `mouth’, and `rest', where `rest’ includes all pixels that are neither `eyes' nor `mouth’. For each example, we specify all possible subsets of the set of tokens in a table, where a green cell indicates the presence of a token in the set. For each subset, the multi-order terms are presented in a histogram.
    }
    \label{fig:symbxai_decompo}
\end{figure*}

\subsection{Multi-order model decomposition}\label{subsect:decompo_model}
As a first step to attribute relevance values to logical formulas, we aim to decompose the model's prediction $f(\bm{X})$ into components that uniquely depend on subsets of features  $\mathcal{L} \subseteq \mathcal{N}$. Such a decomposition is formally given by:
\begin{align}
    f(\bm{X}) = \sum_{\mathcal{L} \subseteq \mathcal{N}}  \mu_\mathcal{L}, \label{eq:pred_decompo_multi_order_subset}
\end{align}
 where the contribution term $\mu_\mathcal{L}$ expresses the contribution of the feature interactions in subset $\mathcal{L}$ to the prediction. It depends only on the elements in $\mathcal{L}$ and does not involve any variables outside of $\mathcal{L}$. We refer to this model decomposition in Equation \eqref{eq:pred_decompo_multi_order_subset} as \emph{multi-order decomposition}, and the terms $(\mu_\mathcal{L})_{\mathcal{L} \subseteq \mathcal{N}}$ as the \emph{multi-order terms}. We consider  $\bm{\mu} = (\mu_\mathcal{L})_{\mathcal{L} \subseteq \mathcal{N}} $ to be a vector depending on subsets $\mathcal{L}$. We also see a description of the multi-order terms  in Figure \ref{fig:meth_descr}\textbf{a)}.

 In principle, there is no straightforward way to define $\bm{\mu}$.  Therefore, we investigate two different ways to obtain such multi-order model decomposition, each from the perspective of popular explanation approaches.

\paragraph{Propagation-based approach to define $\bm{\mu}$}
LRP for graph neural networks (GNN-LRP) \cite{schnake22gnnlrp} is an explanation method designed to obtain higher-order explanations for graph neural networks (GNNs), but it is also applicable to other network architectures. This method considers walks $\mathcal{W} = (I,J, \dots)$, which are ordered sequences of input features $I, J, \ldots \in \mathcal{N}$, and assigns them a relevance score $R_\mathcal{W}$. For more details on how $R_\mathcal{W}$ is specified, particularly for architectures different from GNNs, such as Transformer models, we refer to \ref{app:ho_xai}. As discussed in Section \ref{sect:back_rel}, $R_\mathcal{W}$ expresses the relevance of higher-order interactions between the corresponding features in $\mathcal{W}$. We can use these terms to define the multi-order model decomposition.

\begin{approach}
We define the multi-order terms $\mu_\mathcal{L}$ within the propagation-based approach as the sum over all walks that encompass every feature in the subset $\mathcal{L}$. Formally, this is expressed as:
\begin{align}\label{eq:multi_order_gnnlrp}
\mu_\mathcal{L} = \sum_{\mathcal{W} : \, \text{set}(\mathcal{W}) = \mathcal{L}} R_\mathcal{W}
\end{align}
where set$(\mathcal{W})$ represents the set of all indices within the walk $\mathcal{W}$.
\end{approach}
This approach ensures that the multi-order terms effectively decompose the prediction $f(\bm{X})$, as described in Equation \eqref{eq:pred_decompo_multi_order_subset}. This property holds true because the sum of the relevance values $R_\mathcal{W}$ for all walks equals the model’s prediction $f(\bm{X})$ (see e.g. \cite{schnake22gnnlrp}).

\paragraph{Perturbation-based approach to define $\bm{\mu}$}
An alternative way to specify the multi-order terms in Equation \eqref{eq:pred_decompo_multi_order_subset} is to use a perturbation-based approach \cite{blucher2024flipping, zeiler2014occlusion, Lund2020shapInterVal, FUJIMOTO2006AxiomInterIndex, Kedar2020shapTaylor}. This approach estimates the model’s prediction by focusing on smaller areas of the input, perturbing the features outside the area of interest.

Formally, we consider the input sample $\bm{X}_\mathcal{S}$, which is equivalent to the original input $\bm{X}$ for features in the subset $\mathcal{S}$, i.e.,  $(\bm{X}_{\mathcal{S}})_{ I} = x_I$ for $I \in \mathcal{S}$. The rest of $\bm{X}_\mathcal{S}$ is perturbed, for example, by setting it to a constant value or using an inpainting method \cite{lund17unified, BLUCHER2022PredDiff}. The prediction $f(\bm{X}_\mathcal{S})$ of the original model $f$ on this modified input $\bm{X}_\mathcal{S}$ estimates the model’s prediction for the smaller input area. Additionally, if all information is perturbed, $\mathcal S = \{ \varnothing \}$, we assume that the model predicts zero, i.e.,  $f(\bm{X}_\varnothing) = 0$. We can use this approach to specify the multi-order decomposition of the model's prediction.

\begin{approach}
We define the multi-order terms $\mu_\mathcal{L}$ within perturbation-based approach as the \emph{Harsanyi dividend} of $f(\bm{X}_\mathcal{L})$, given by:
\begin{align}\label{eq:multi_ord_def_moebius}
\mu_\mathcal{L} = \sum_{\mathcal{S} \subseteq \mathcal{L}} (-1)^{| \mathcal{L}| - |\mathcal{S} |} \, f(\bm{X}_\mathcal{S})
\end{align}
The Harsanyi dividends are common in cooperative game theory \cite{harsanyi1963dividedCoprGame}.
\end{approach}

The Harsanyi dividend specifically measures the part of the contribution that arises from the \textit{dependencies} among all features in $\mathcal{L}$. Denoting the input features by $I, J, …$, the first two orders of the Harsanyi dividends are given as:
\begin{align*}
\mu_{I}  &= f(\bm{X}_I)\\
\mu_{IJ}  &= f(\bm{X}_{IJ})  - f(\bm{X}_I) -  f(\bm{X}_J)
\end{align*}

It is well known that Harsanyi dividends ensure that the multi-order terms effectively decompose the prediction, as desired in Equation \eqref{eq:pred_decompo_multi_order_subset} (see e.g. \cite{FUJIMOTO2006AxiomInterIndex}).

\begin{figure*}[ht]
    \centering
    \includegraphics[width=.9\textwidth]{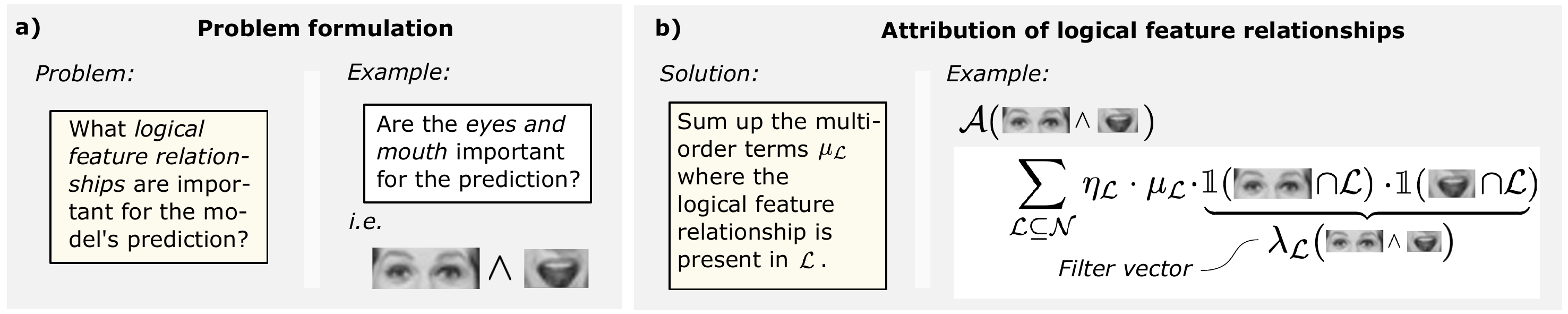}
    \caption{Description of how queries are composed and how to calculate their relevance. \textbf{a)} shows the problem formulation and an example for the importance of abstract dependencies, aka queries. In \textbf{b)}, the solution to the problem is specified with an example showing how to calculate such the relevance of a query.}
    \label{fig:symbxai_prob_attri}
\end{figure*}

\medskip

In fact, the two approaches to specify  $\bm{\mu}$ in Equations \eqref{eq:multi_order_gnnlrp} and \eqref{eq:multi_ord_def_moebius} are related. In the context of higher-order propagation-based explanation methods, the common way to specify the relevance of a subset of features $\mathcal{S}$ is by summing over all walks that can be composed with features in $\mathcal{S}$, i.e., $R_\mathcal{S} = \sum_{\mathcal{W} \in \mathcal{S} \times \mathcal{S} \times \dots} R_\mathcal{W}$. This concept was proposed in \cite{schnake22gnnlrp} and further developed with computational acceleration in \cite{xiong22asubgraph}.

We can now use the subgraph relevance to connect the two approaches. If we replace $f(\bm{X}_\mathcal{S})$ with $R_\mathcal{S}$ in Equation \eqref{eq:multi_ord_def_moebius} and then compute the Harsanyi dividends of $R_\mathcal{S}$, we obtain exactly the multi-order terms in Equation \eqref{eq:multi_order_gnnlrp}. This implies that it is methodologically consistent to use the relevance of subsets $R_\mathcal{S}$ to estimate the model’s prediction on a smaller area of the input, as used in the context of perturbation-based method. We provide a proof of this finding in \ref{app:express_multi_with_walks}.

\medskip
\paragraph{Showcasing multi-order terms on synthetic data}
Figure \ref{fig:symbxai_decompo} shows multi-order terms generated from the predictions of sentiment classification and FER models. For the sentence ``not very good'', Figure \ref{fig:symbxai_decompo}\textbf{a)} reveals relatively high values for the multi-order terms $\mu_{\text{not very good}}$ (first row) and $\mu_{\text{not good}}$ (third row), while the other terms play weaker roles. This indicates that the model finds the combination of ``not'' and ``good'' to be significant for the prediction, highlighting the negation of a positive word.

In the FER experiment shown in Figure \ref{fig:symbxai_decompo}\textbf{b)}, the multi-order terms $\mu_{\text{rest}}$ (fifth row) and $\mu_{\text{mouth rest}}$ (second row) have the highest values. This suggests that the `rest' segment, which includes all pixels except those corresponding to the `eyes' and `mouth', significantly influences the model's prediction, particularly when combined with the `mouth' segment. This shows that the incorporation of the `rest' segment is crucial when prediction the emotion of this image. 

\medskip

A challenge when working with the multi-order terms is that their number increases exponentially with the number of input features. This makes it particularly difficult to draw conclusions about the prediction strategies solely by examining the values of these terms. This challenge motivates us to summarize the multi-order terms into human-understandable explanations: the relevance values of logical formulas. In the following sections, we will elucidate how to specify these logical formulas and calculate their relevance values.

\subsection{Calculating relevance of logical formulas} \label{sect:attribute_queries}

We aim to compute the relevance of any logical relationships between input features using the multi-order decomposition in Equation \eqref{eq:pred_decompo_multi_order_subset}. A visual description of the problem formulation and how to attribute relevance to a query, can be seen in Figure \ref{fig:symbxai_prob_attri}.

To understand the advantage of using the multi-order terms from Section \ref{subsect:decompo_model} for relevance attribution of logical formulas, we motivate it with the conjunctive relationship between features, as illustrated in Equation \eqref{eq:inclu_exclu_princ}. Assuming the relevance $\mathcal{A}$ of feature sets $\mathcal{L}$ is measured using Occlusion Sensitivity \cite{zeiler2014occlusion, BLUCHER2022PredDiff}, i.e., $\mathcal{A}(\mathcal{L}) = f(\bm{X}) - f(\bm{X}_{\mathcal{N} \setminus \mathcal{L}})$, we obtain:
\begin{align}\label{eq:wedge_harsanyi_def}
    \mathcal{A}(I \wedge J) = \sum_{\mathcal{L} \subseteq \mathcal{N}} \mu_\mathcal{L} \cdot \mathbbm{1}( I \in \mathcal{L} \wedge J \in \mathcal{L} )
\end{align}
where $\mu_\mathcal{L}$ represents the multi-order terms from the   pertur-\\bation-based approach. This is a well-established result and can be verified, for instance, in \cite{garbish2000EquiRepSet}, Section 2. Our goal is to extend the use of multi-order terms $\bm{\mu}$ to define relevance $\mathcal{A}$ for more general logical formulas. Additionally, we incorporate a weight for each term $\mu_\mathcal{L}$ to enforce desired properties in the relevance function $\mathcal{A}$.

\paragraph{Definition of queries}
We explicitly define what a logical formula, called \emph{query}, is composed of in our explanation framework.
\begin{definition}
    A query $q$ is a sequence of symbols, where each symbol is either referencing to a set of feature indices $\mathcal{S}$, or a logical connective of conjunction $\mathcal{\wedge}$ or negation $\neg$. An example is $q' = ( \mathcal{S}_1, \wedge, \neg, \mathcal{S}_2)$, where $\mathcal{S}_1$ and $\mathcal{S}_2$ are referencing to feature sets. We denote $\mathcal{Q}$ as the set of all well-formed queries, meaning $q \in \mathcal{Q}$ is a readable combination of symbols, referring each to logical connectives or feature sets. For a query $q$ we abbreviate the notation by omitting the parentheses and commas. In the case of the exemplary $q'$, we  write $q'= \mathcal{S}_1 \wedge \neg \mathcal{S}_2$. 
\end{definition}

\paragraph{Definition of query relevance} We define the relevance attribution of queries using the multi-order terms in Equation \eqref{eq:pred_decompo_multi_order_subset}.
\begin{definition}
    The relevance of a query $q \in \mathcal{Q}$ is given by:
\begin{align}\label{eq:query_attribution_def}
    \mathcal{A}(\bm{\eta}, \bm{\mu}, q) = \sum_{\mathcal{L} \subseteq \mathcal{N}} \eta_\mathcal{L} \cdot \mu_\mathcal{L} \cdot \lambda_\mathcal{L}(q)
\end{align}
Here, $\bm{\eta} = (\eta_\mathcal{L})_{\mathcal{L} \subseteq \mathcal{N}}$ is a given \emph{weight vector}, which introduces customized weights of the relevance attribution, and $\bm{\lambda}(q) = (\lambda_\mathcal{L}(q))_{\mathcal{L} \subseteq \mathcal{N}}$ is a \emph{filter vector}, corresponding to the query $q$, that maps each subset $\mathcal{L} \subseteq \mathcal{N}$ to a Boolean value.
\end{definition}
To avoid heavy notation overhead, we do not always specify $\bm{\mu}$ and $\bm{\eta}$ in the arguments of $\mathcal{A}$ when it is clear from the context which weight vector and multi-order terms are being used. We provide a visual description of the relevance attribution of logical formulas in Figure \ref{fig:meth_descr}\textbf{a)}.

In the following Sections \ref{subsec:spec_query} and \ref{sect:eta_chapt}, we provide more details on the filter and weight vectors, respectively.

\subsubsection{Specifying the filter vector of a query}\label{subsec:spec_query}
We provide a more detailed view on how to obtain a filter vector $\bm{\lambda}(q)$ from a query $q$. An visual description how to specify the filter vector is shown in Figure~\ref{fig:symbxai_query_specify}. For an example of the filter vector, see Figure \ref{fig:meth_descr}\textbf{a)}. 

To reduce ambiguity when specifying symbols of subsets $\mathcal{S}$ in a query, we explicitly add in this subsection the prefix `$q: $' in front of the query in the argument of the filter vector, such as $\bm{\lambda}(q: \mathcal{S})$ for the filter vector of the query $q = \mathcal{S}$.

\paragraph{The query of feature presence}
We aim to estimate the relevance of the \emph{presence} of features $\mathcal{S}$, specified by the query $q = \mathcal{S}$. The filter vector $\bm{\lambda}(q: \mathcal{S})$ is defined by:
\begin{align*}
    \lambda_\mathcal{L}(q: \mathcal{S}) = \mathbbm{1}(\mathcal{S} \cap \mathcal{L}) \hspace{1cm} \text{for } \mathcal{L} \subseteq \mathcal{N}
\end{align*}
This means that for the relevance of $q=\mathcal{S}$ using Equation \eqref{eq:query_attribution_def}, we keep all $\mu_\mathcal{L}$ where $\mathcal{S}$ and $\mathcal{L}$ intersect. 
We note that the computation of first-order SHapley Additive exPlanations (SHAP) \cite{lund17unified} or Occlusion Sensitivity \cite{zeiler2014occlusion, BLUCHER2022PredDiff} is equivalent to query relevance of single feature presence, i.e., $q = I$, with different weighting vectors $\bm{\eta}$. More details on the weighting function are given in Section \ref{sect:eta_chapt}.

\paragraph{The query of feature absence}
Conversely, we aim to estimate the relevance of the \emph{absence} of features $\mathcal{S}$, specified by the query $q = \neg \mathcal{S}$. The filter vector $\bm{\lambda}(q: \neg \mathcal{S})$ is defined by:
\begin{align*}
    \lambda_\mathcal{L}(q: \neg \mathcal{S}) = \mathbbm{1}(\mathcal{S}  \subseteq \overline{\mathcal{L}}) \hspace{1cm} \text{for } \mathcal{L} \subseteq \mathcal{N}
\end{align*}
where $\overline{\mathcal{L}} = \mathcal{N} \setminus \mathcal{L}$ is the complement set of $\mathcal{L}$. When computing the relevance of the query $q = \neg \mathcal{S}$ in Equation \eqref{eq:query_attribution_def}, we only sum over those $\mu_\mathcal{L}$ where $\mathcal{S}$ and $\mathcal{L}$ are disjoint.

To the best of our knowledge, the relevance of the absence of features has not been proposed before. When using the perturbation-based approach for specifying $\bm{\mu}$, as in Equation \eqref{eq:multi_ord_def_moebius}, the attribution of the absence of the feature $I$, i.e., $q = \neg I$, with a weight vector $\bm{\eta} = 1$ in Equation \eqref{eq:query_attribution_def}, yields:
\begin{align}\label{eq:neg_occlusion_rel}
     \mathcal{A}(q: \neg I) = f(\bm{X}_{\mathcal{N} \setminus I})
\end{align}
which is the model's prediction when the feature $I$ is absent.

\begin{figure*}[ht]
    \centering
    \includegraphics[width=.9\textwidth]{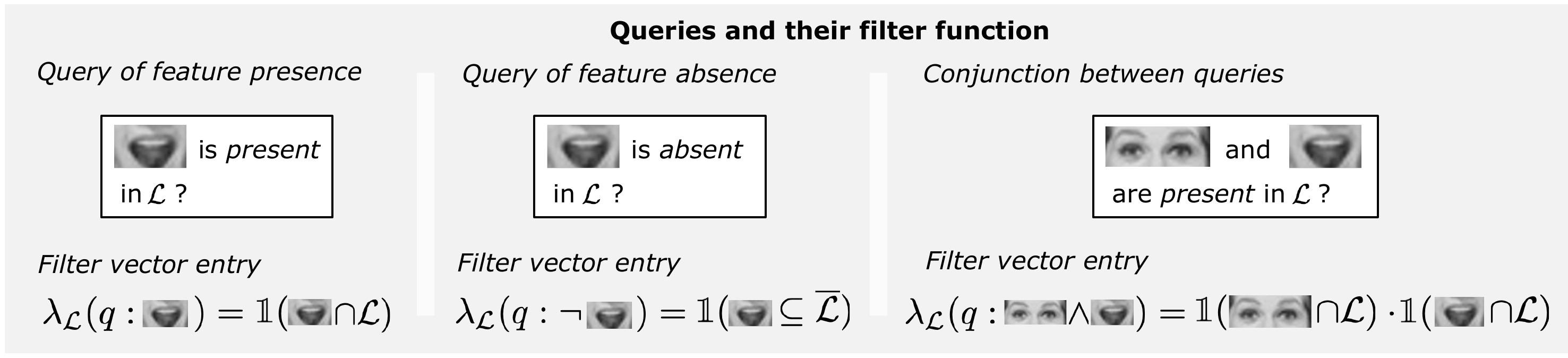}
    \caption{Description of the fundamental rules for the query composition are given alongside the corresponding filter vector. We consider the \emph{presence} and \emph{absence} of feature sets, and the logical conjunction between two queries.}
    \label{fig:symbxai_query_specify}
\end{figure*}

\paragraph{Logical conjunction between queries}
To build more complex queries, we aim to estimate the relevance of the conjunction ($\wedge$) between queries $q_1, q_2 \in \mathcal{Q}$, i.e., $q= q_1 \wedge q_2$. The filter vector $\bm{\lambda}(q: q_1 \wedge q_2)$ is  defined by:
\begin{align*}
    \lambda_\mathcal{L}(q:  q_1 \wedge q_2) = \lambda_\mathcal{L}(q: q_1) \cdot \lambda_\mathcal{L}(q: q_2) \hspace{1cm} \text{for } \mathcal{L} \subseteq \mathcal{N}
\end{align*}
This means $\lambda_\mathcal{L}(q: q_1 \wedge q_2)$ is true if and only if both $\lambda_\mathcal{L}(q: q_1)$ and $\lambda_\mathcal{L}(q: q_2)$ are true. This rule is applied recursively until we reach the cases that $q_1$ and $q_2$ either express the query of feature presence or absence, as described above.

\medskip
We specified the filter function for queries which are composed of feature subsets, the logical connectives $\neg$ and $\wedge$, which enables us to obtain the query relevance for any query $q \in \mathcal{Q}$.

\subsubsection{Specifying weight vectors} \label{sect:eta_chapt}

The choice of the weight vector $\bm{\eta}$ is custom and depends on the user's needs. We aim to offer a straightforward strategy for defining $\bm{\eta}$ with the consideration of desirable properties of the explanation. 

\paragraph{Occlusion values of queries}

One way to specify the weight vector for queries is by setting it to unity, i.e., 
\begin{align*}
\eta_\mathcal{L} = 1 \hspace{1cm} \text{for } \mathcal{L} \subseteq \mathcal{N}
\end{align*}
For the first-order query $q = I$, such a weight vector leads to the so-called Occlusion Sensitivity \cite{zeiler2014occlusion} or PredDiff \cite{BLUCHER2022PredDiff} method. We therefore call it the \textit{occlusion value} of queries. 

This weighting allows for calculating the relevance of some queries without explicitly computing the multi-order terms in Equation \eqref{eq:pred_decompo_multi_order_subset}, which can be computationally expensive. This result is known for the relevance of feature presence, as it is equivalent to Occlusion Sensitivity \cite{zeiler2014occlusion}. We also observed a similar result for logical conjunction in Equation \eqref{eq:inclu_exclu_princ} and feature absence in Equation \eqref{eq:neg_occlusion_rel}. Thus, occlusion values of queries are often computationally efficient.

\paragraph{Shapley values of queries}

In classical explanation tasks, it is desirable for the explanation method to exhibit the \textit{conservation property}. In the context of single-feature explanation methods such as SHAP \cite{lund17unified}, which we can obtain by attributing the queries $q = I$, this means that the sum of all feature attributions equals the model's prediction, i.e.,  $\sum_I \mathcal{A}(I) = f(\bm{X})$. 

We now show that there is a weighting function $\bm{\eta}$ which enforces the conservation property for any set of queries $(q_k)_{k=1}^M$:
\begin{lemma} \label{lemm:conserv_query}
Let $(q_k)_{k=1}^M$ be a set of queries with, such that for every subset $\mathcal{L} \subseteq \mathcal{N}$ we have $\sum_{k=1}^M  \lambda_\mathcal{L}({q_k}) >0$. 
If we set the weight vector to be 
\begin{align*}
    \eta_\mathcal{L} = \left(\sum_{k=1}^M  \lambda_\mathcal{L}({q_k})\right)^{-1},
\end{align*}
we get: $\sum_{k=1}^M \mathcal{A}(\bm{\eta}, \bm{\mu}, q_k) = f(\bm{X})$.
\end{lemma}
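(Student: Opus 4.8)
The plan is to substitute the definition of the query relevance from Equation \eqref{eq:query_attribution_def} into the sum $\sum_{k=1}^M \mathcal{A}(\bm{\eta}, \bm{\mu}, q_k)$, interchange the order of summation so that the outer sum is over subsets $\mathcal{L} \subseteq \mathcal{N}$, and then collapse the inner sum using the chosen weight vector. Concretely, I would write
\begin{align*}
\sum_{k=1}^M \mathcal{A}(\bm{\eta}, \bm{\mu}, q_k)
= \sum_{k=1}^M \sum_{\mathcal{L} \subseteq \mathcal{N}} \eta_\mathcal{L} \cdot \mu_\mathcal{L} \cdot \lambda_\mathcal{L}(q_k)
= \sum_{\mathcal{L} \subseteq \mathcal{N}} \eta_\mathcal{L} \cdot \mu_\mathcal{L} \sum_{k=1}^M \lambda_\mathcal{L}(q_k).
\end{align*}
The swap of the two finite sums is unconditionally valid. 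Now I would plug in $\eta_\mathcal{L} = \big(\sum_{k=1}^M \lambda_\mathcal{L}(q_k)\big)^{-1}$, which is well-defined precisely because of the hypothesis $\sum_{k=1}^M \lambda_\mathcal{L}(q_k) > 0$ for every $\mathcal{L}$; this makes the factor $\eta_\mathcal{L} \sum_{k=1}^M \lambda_\mathcal{L}(q_k)$ equal to $1$ for each $\mathcal{L}$.

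After this cancellation the right-hand side becomes $\sum_{\mathcal{L} \subseteq \mathcal{N}} \mu_\mathcal{L}$, and the proof concludes by invoking the defining property of the multi-order decomposition, Equation \eqref{eq:pred_decompo_multi_order_subset}, which states exactly that $\sum_{\mathcal{L} \subseteq \mathcal{N}} \mu_\mathcal{L} = f(\bm{X})$. That identity holds for both the propagation-based choice of $\bm{\mu}$ (Approach with Equation \eqref{eq:multi_order_gnnlrp}, since the relevances $R_\mathcal{W}$ over all walks sum to $f(\bm{X})$) and the perturbation-based choice (Harsanyi dividends, Equation \eqref{eq:multi_ord_def_moebius}), as already noted in Section \ref{subsect:decompo_model}, so the lemma is agnostic to which definition of $\bm{\mu}$ is used.

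There is essentially no deep obstacle here — the result is a one-line manipulation once the summation order is interchanged. The only point that genuinely requires the stated hypothesis is the division step: without $\sum_{k=1}^M \lambda_\mathcal{L}(q_k) > 0$ the weight $\eta_\mathcal{L}$ would be undefined (division by zero) for some $\mathcal{L}$, and moreover such an $\mathcal{L}$ would contribute $\mu_\mathcal{L}$ to $f(\bm{X})$ with no query to "cover" it, so conservation would genuinely fail. I would therefore state clearly in the proof that the hypothesis is exactly what guarantees both that $\bm{\eta}$ is well-defined and that every multi-order term is accounted for. If desired, one could add a remark that this recovers the familiar Shapley-style conservation for the single-feature family $(q_k) = (I)_{I \in \mathcal{N}}$, where each $\mathcal{L} \neq \varnothing$ is covered $|\mathcal{L}|$ times and $\mathcal{L} = \varnothing$ contributes $\mu_\varnothing = 0$, though the empty set would need to be handled or excluded to satisfy the positivity hypothesis.
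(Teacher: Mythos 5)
Your proposal is correct and follows essentially the same route as the paper's proof: substitute the definition of $\mathcal{A}$, interchange the two finite sums, cancel $\eta_\mathcal{L}$ against $\sum_k \lambda_\mathcal{L}(q_k)$, and conclude with the decomposition property $\sum_{\mathcal{L}\subseteq\mathcal{N}} \mu_\mathcal{L} = f(\bm{X})$. Your added remarks on why the positivity hypothesis is needed and on the Shapley special case are sound but not a different argument.
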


\noindent The proof of this lemma can be found in \ref{app:proof_lemma_shap_weight}.
\medskip 

Note that when applying Lemma \ref{lemm:conserv_query} to the query set $(q_I)_{I \in \mathcal{N}}$ of feature presence $q_I = I$, we obtain the weight vector:  
\begin{align*}
    \eta_\mathcal{L} = \frac{1}{|\mathcal{L}|}.
\end{align*}
It is known\cite{FUJIMOTO2006AxiomInterIndex} that the relevance $\mathcal{A}(q_I)$ with this weight vector coincides with the Shapley value of the feature $I$. This motivates us to call the relevance $\mathcal{A}(q_k)$, with $\bm{\eta}$ given by Lemma \ref{lemm:conserv_query}, the \textit{Shapley value of query $q_k$} within the set of queries $(q_k)_{k=1}^M$.

\subsection{Automatic specification of expressive queries}\label{sec:automatization}
In practice, the query specifications are not always given by the user. The options for queries in $\mathcal{Q}$ are large and to find a query that properly expresses the model's prediction is challenging. 

In this subsection, we aim to describe a search algorithm that expresses and summaries best the prediction strategy of the model. We provide a visual description the query search  in Figure \ref{fig:meth_descr}\textbf{b)}.

\paragraph{Problem formulation}

Our framework is based on the multi-order decomposition of the prediction $f(\bm{X})$ into the terms $\mu_\mathcal{L}$, as given in Equation \eqref{eq:pred_decompo_multi_order_subset}. To summarize the prediction strategy of $f(\bm{X})$, we aim to find the query $q^*$ for which the filter vector $\bm{\lambda}(q^*)$ is an adequate description of the multi-order terms $\bm{\mu}$. 
This suggests that we need to identify a measure of \textit{similarity} between $\bm{\lambda}(q^*)$ and $\bm{\mu}$. In our case we consider the correlation between $\bm{\mu}$ and $q^*$, i.e., 
\begin{align}\label{eq:opt_automat_general}
    q^* = \argmax_{q \in \mathcal{Q}} \text{corr}_{\bm{\eta}}(\bm{\lambda}(q), \bm{\mu})
\end{align}
where $\mathcal{Q}$ is the set of all queries, as specified in Section \ref{subsec:spec_query}, and $\text{corr}_{\bm{\eta}}$ is the weighted correlation by the weighting vector $\bm{\eta}$. The explicit definition of $\text{corr}_{\bm{\eta}}$ is given in   \ref{app:corr_def}. Note that, due to the computational complexity, we compute in practice the multi-order terms $\mu_\mathcal{L}$ only for subsets $\mathcal{L}$ that contain at most 4 features.

\section{Showcasing the Application of SymbXAI in Different Application Domains}\label{sect:showcases}

In the following sections, we demonstrate the use of our SymbXAI framework for selected  data and models from the domains of natural language processing, computer vision, and quantum chemistry. 

\subsection{Usage in natural language processing} \label{sect:showcase_nlp}
 In the field of natural language processing (NLP), ML models play a crucial role in learning from text data. These models are employed across a diverse range of applications, such as information extraction \cite{Jurafsky2009InfoExtract}, text generation \cite{IQBAL2022TextGenSurvey}, and sentiment analysis \cite{socher2013recursive, gxai_survey2023yuan}.

In the following sections, we demonstrate the usefulness of our framework in the NLP domain, particularly for gaining insights into the predictions of sentiment analysis models. The objective of sentiment analysis is to determine the sentiment expressed in a given sentence or paragraph, such as a movie review conveying an opinion about a film. In Section \ref{sec:rel_vs_ground_truth_nlp}, we compare our framework’s performance with ground-truth annotations. Additionally, in Section \ref{sect:input_flipping_descr}, we illustrate how the flexibility of modeling queries within our framework significantly impacts common evaluation metrics for explanation methods. Finally, in Section \ref{sect:automat_nlp_cont_conj}, we show how to automatically identify expressive queries that best represent the underlying prediction strategies when predicting sentiment from sentences with common grammatical constructions.

 \begin{figure*}
     \centering
     \includegraphics[width=0.8\textwidth]{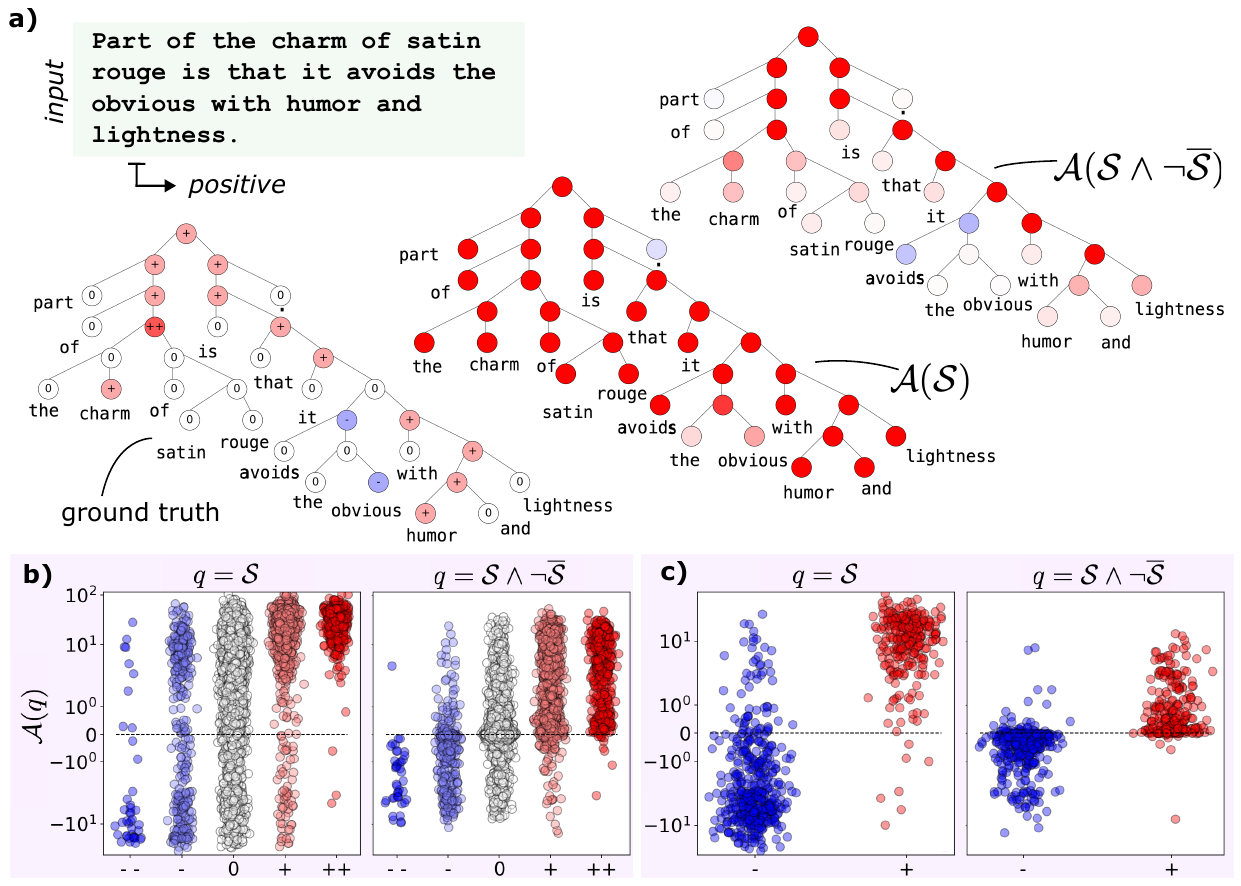}
     \caption{Evaluation of the Symbolic XAI framework on the SST and Movie Reviews datasets. In subfigure \textbf{a)} we see the relevance and ground-truth human annotations of one example with positive sentiment. The nodes in the tree specify the subsentence $\mathcal{S}$ of all leaf-nodes. The color of the nodes specify (from left to right) ground-truth relevance, $\mathcal{A}(\mathcal{S})$ and $\mathcal{A}(\mathcal{S} \wedge \neg \overline{\mathcal{S}})$. In \textbf{b)} and \textbf{c)} we see the relevance of different queries that incorporate different subsentence for 200 samples in each subfigure, for the the SST and Movie Reviews datasets, respectively. The relevance values are organized by their ground-truth values. }
     \label{fig:nlp_sst_imdb_summary}
 \end{figure*}

\paragraph{Datasets}

The SST dataset \cite{socher2013recursive} comprises 11,855 sentences extracted from movie reviews. Each sentence is subdivided into human-readable subsentences, which are categorized into one of five labels: \textit{very negative}, \textit{negative}, \textit{neutral}, \textit{positive}, or \textit{very positive}. This dataset is referred to as SST-5. The annotations of the subsentences were generated by reading each subsentence individually and indicating their associated sentiments.

Another version of this dataset, referred to as SST-2, consists of a subset of sentences from SST-5 with compressed label information. Sentences with a positive or very positive label in SST-5 are labeled \textit{positive} in SST-2. Similarly, sentences with a negative or very negative label in SST-5 are labeled \textit{negative} in SST-2. Neutral sentences are omitted, and the fine-grained sentiment information of subsentences is also omitted in SST-2.

The Movie Reviews dataset \cite{movie_reviews1, movie_reviews2} contains movie reviews with binary sentiment labels: positive or negative. Each review is accompanied by rationale annotations, which are specific parts of the review that support the sentiment label. These annotations are based on the full-text analysis of the reviews, depending on the context of the whole text.

It is essential to highlight the difference in annotation methods between the SST-5 and Movie Reviews datasets. The key distinction lies in the presence or absence of contextual information in these annotations. The annotations for the Movie Reviews dataset are given with the awareness of the whole paragraph, i.e.,  the context of the subsentences. In contrast, the annotations in the SST-5 dataset lack this context information, meaning they are self-contained and autonomous, not relying on the rest of the sentence. We demonstrate how our framework can capture this contextual aspect.

\medskip

\paragraph{Experimental setup} We will use our proposed SymbXAI framework in the NLP domain to interpret  predictions of a popular Transformer model, namely BERT \cite{devlin2019bert}, which was trained for the sentiment analysis task. Further details about the model can be found in  \ref{app:nlp_model_details}.

Our evaluation encompasses a thorough examination and interpretation of the model's predictions on two sentiment analysis datasets: Movie Reviews \cite{movie_reviews1, movie_reviews2} and  Stanford Sentiment Treebank (SST) \cite{socher2013recursive}.

\subsubsection{Assessing the relevance of queries against ground-truth values}\label{sec:rel_vs_ground_truth_nlp}
We evaluate the quality of the relevance values produced by our framework by comparing them with human annotations.

In our experiments, we employ the BERT model, which is pre-trained on the SST-2 dataset, and evaluate its performance on SST-5 to analyze the model's understanding in a more complex setting. We utilize the relevance $\mathcal{A}(q)$ for the queries $q= \mathcal{S}$ and $q= \mathcal{S} \wedge \neg \overline{\mathcal{S}}$, which consider the presence of the subsentence $\mathcal{S}$ with and without surrounding features, respectively.

The qualitative outcome of the query relevance on a sentence from SST-5 is visualized in Figure \ref{fig:nlp_sst_imdb_summary}\textbf{a)}, where positive, negative, and neutral sentiments are represented by the colors red, blue, and white, respectively. It is worth noting that since the model has been trained on only two labels, positive and negative, it lacks familiarity with the nuances of neutral sentiment. The results obtained from the query contribution $\mathcal{A}(\mathcal{S} \wedge \neg \overline{\mathcal{S}})$ demonstrate a stronger alignment with the true labels compared to those derived from the query contribution $\mathcal{A}(\mathcal{S})$. This phenomenon can be attributed to the fact that, during the process of generating ground-truth annotations for the SST-5 dataset, annotators were constrained to evaluate individual phrases in isolation, lacking the information to consider the complete context.

We illustrate a comparable quantitative observation in Figure \ref{fig:nlp_sst_imdb_summary}\textbf{b)}. In this experiment, we evaluate the model's competence in previewing the sentiment of each subsentence from the SST-5 dataset, spanning from very negative to very positive. Alongside its mostly accurate identification of positive and negative sentiments, our model demonstrates an understanding of neutral sentiment, even though it was not explicitly trained for it. Furthermore, when the context is disregarded, i.e., $\mathcal{A}(\mathcal{S}\wedge \neg \overline{\mathcal{S}})$, a more evident correlation emerges between the true labels and the signs of the relevance scores, quantitatively underscoring that this query relevance reflects the method of assigning human annotations to subsentences.

Figure \ref{fig:nlp_sst_imdb_summary}\textbf{c)} provides a quantitative representation of similar findings for the Movie Reviews dataset. We investigate the model's ability to understand the positivity or negativity of the movie rationales. In this figure, we show the relevance distributions of the positive and negative rationales in different contextual settings for the queries. As shown, the model is capable of accurately assigning positive and negative relevance scores to positive and negative rationales, respectively, especially when the full context is taken into account. This underscores that the contribution measure $\mathcal{A}(\mathcal{S})$, which incorporates the full context, aligns more closely with human annotations. Further results on the Movie Reviews dataset can be found in \ref{app:add_nlp_eval}.

\subsubsection{Evaluating SymbXAI via input flipping}\label{sect:input_flipping_descr}

We demonstrate the performance of our framework using the well-established input flipping strategy \cite{samek2016evaluating, blucher2024flipping}. This metric evaluates explanation methods by testing their ability to estimate the effect on the model's output when individual features are removed from or added to the input.

When removing features from the input, we track the \textit{removal curve}, represented by $ RC_j = f(\bm{X}_{\mathcal{N} \setminus \{I_1, \dots, I_j\}}) $ for some sequence $(I_1, \dots, I_n)$ of input features $ I_j \in \mathcal{N} $. When adding features to the input, we track the \textit{generation curve}, represented by $ GC_j = f(\bm{X}_{\{I_1, \dots, I_j\}}) $. Here, $ f(\bm{X}_\mathcal{S}) $ represents the model's output for input $ x $, where features not in $ \mathcal{S} $ are removed or inpainted, as described in Section \ref{subsect:decompo_model}. We consider the areas under the removal or generation curves as AURC and AUGC, respectively.

We measure the performance of an explanation method by its ability to select a sequence of input features $(I_1, \dots, I_n)$ such that the AURC or AUGC is maximized or minimized, resulting in four different evaluation tasks.

\paragraph{Selecting a feature sequence using SymbXAI}

In our framework, we select the relevance of the queries $ q = \neg \mathcal{S} $ and $ q = \mathcal{S} \wedge \neg \overline{\mathcal{S}} $, with $ \bm{\eta} = 1 $, to be predictors of the model's output when the features $ \mathcal{S} $ are removed from or added to the input, respectively. Inspired by the search algorithm known as \textit{local best guess} \cite{schnake22gnnlrp,xiong22asubgraph}, we build the sequence $(I_1, \dots, I_n)$ by iterating over each feature in $ \mathcal{N} $ and selecting the feature for which the attribution of the corresponding query is maximized or minimized, depending on the evaluation task.

For example, when we aim to minimize the removal curve, we select the sequence $(I_1, \dots, I_n)$ by the following iteration scheme:
\begin{align*}
    I_{j+1} = \argmin_{I' \in \mathcal{N}} \mathcal{A}( \neg \{ I_1, \dots, I_{j}, I' \}) && \text{for } j = 1, \ldots, n-1
\end{align*}
For the other evaluation tasks, we adjust accordingly by replacing the target query or using $\argmax$ instead of $\argmin$.

\paragraph{Flipping input features in NLP}
We compare the performance of various explanation methods on the task of flipping input features.
In all experiments, we use the occlusion weight vector $\bm{\eta} = 1$. 
We compare the faithfulness of the explanations generated by our framework against those obtained by existing first-order explanation methods, such as \textit{PredDiff} \cite{BLUCHER2022PredDiff, zeiler2014occlusion}, LRP \cite{bach-plos15, montavon-pr17} and a random baseline. For these first-order methods, we select the order of the flipping in a standard way \cite{blucher2024flipping}, starting form the most or least relevant feature, depending on the perturbation task.

\setlength{\tabcolsep}{4pt} 
\renewcommand{\arraystretch}{1.1} 
\begin{table}[h]
    \centering
    \begin{subtable}{\textwidth}
    \centering
        \begin{tabular}{c|cccc}
        \toprule
    & \makecell{$\min$ \\ AURC} & \makecell{$\max$ \\ AURC} & \makecell{$\min$ \\ AUGC} & \makecell{$\max$ \\ AUGC} \\ 
    \hline
    \textit{SymbXAI} & \textbf{-0.39}  & \textbf{6.71}  & \textbf{-0.38}  & \textbf{6.68}  \\ 
    \textit{LRP} & 0.14  & 6.49  & 0.14  & 6.49  \\ 
    \textit{PredDiff} & 0.42  & 6.62  & 0.42  & 6.62  \\ 
    \textit{random} & 3.91  & 3.65  & 3.81  & 3.83  \\ 
    \bottomrule
    \end{tabular}
    \caption{Area under the flipping curves on the SST dataset for 170 text samples. }\label{tab:perturbation_sst}
    \end{subtable}
    \hfill 
    \begin{subtable}{\textwidth}
        \centering
    \begin{tabular}{c|cccc}
    \toprule
    & \makecell{$\min$ \\ AURC} & \makecell{$\max$ \\ AURC} & \makecell{$\min$ \\ AUGC} & \makecell{$\max$ \\ AUGC} \\ 
    \hline 
    \textit{SymbXAI} & \textbf{-5.42}  & \textbf{6.16}  & \textbf{-5.56}  & \textbf{6.04}  \\ 
    \textit{LRP} & -4.78  & 5.47  & -4.78  & 5.47  \\ 
    \textit{PredDiff} & -4.52  & 5.4  & -4.52  & 5.4  \\ 
    \textit{random} & 0.27  & 0.23  & 0.19  & 0.5  \\ 
    \bottomrule
    \end{tabular}
    \caption{Area under the flipping curves on the Movie Reviews dataset for 97 text samples. }
    \label{tab:perturbation_imdb}
    \end{subtable}
    \caption{Results of the flipping analysis on the SST and Movie Reviews datasets. Each table provides the area under the flipping curve for different optimization tasks, such as maximizing or minimizing the AURC or AUGC.  
    }
\end{table}

In Tables~\ref{tab:perturbation_sst} and \ref{tab:perturbation_imdb}, we see the results of the flipping analysis on the SST \cite{socher2013recursive} and Movie Reviews \cite{imdb} datasets, respectively. To generate the flipping curves, we consider the model's output $f(\bm{X}_\mathcal{S})$ for a subset of word indices $\mathcal{S}$, where the missing words in $\bm{X}_\mathcal{S}$ are simply discarded, not inpainted. The output of the model $f(\bm{X}_\mathcal{S})$ describe the `logits' of the classification task, meaning the output of the model before parsing it into a probability. Negative values describe low or even opposite evidence for the target class, where positive values describe evidence for the target class. 

\begin{figure}[!hb]
    \includegraphics[width=\textwidth]{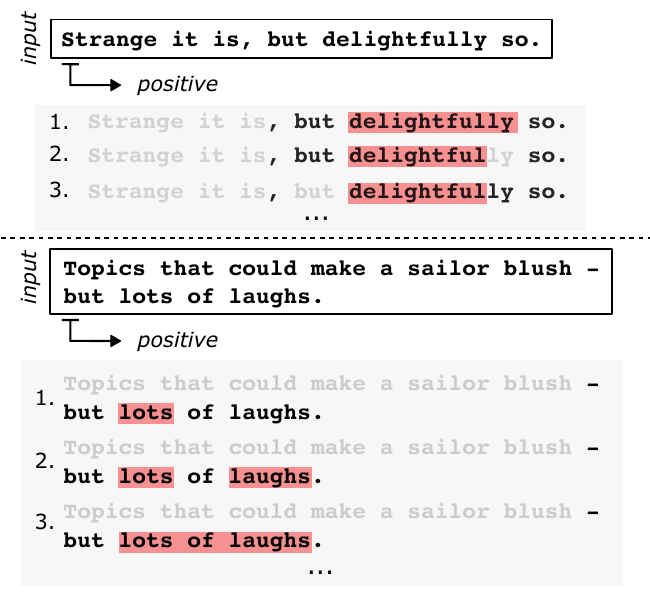}
    \caption{The image shows the most expressive queries for two different contrastive conjunction sentences. Both sentences have positive sentiments. Each sentence is accompanied by its three most expressive queries, ranked based on the similarity measure $\text{corr}(\bm{\mu},q)$, represented as a list. For the query representation, gray font and red background considers the absence and presence of the words in the sentence, respectively. For example the query $q = `\neg (\text{strange it is}) \wedge \text{delightfully}$' is visualized by ``{\color{lightgray}Strange it is}, but \hl{delightfully} so.''} 
    \label{subfig:quali_contr_conj_auto_nlp}
\end{figure}

  \begin{table}[!hb]
    \centering
    \begin{tabular}{c|c|c}
      $q^+ \supseteq $ & $\mathbb{b}$ & $ \neg \mathbb{a}$   \\
      \hline
       & 83.9\%& 93.5\% \\
    \end{tabular}
    \caption{Frequency of sub-queries $\neg \mathbb{a}$ and $ \mathbb{b}$ in the most expressive query $q^+$. This quantitative evaluation is performed using 36 samples of the SST dataset. Here, tokens $\mathbb{a}$ and $\mathbb{b}$ refer to any token in the subsentence  $\mathbb{A}$ and $\mathbb{B}$, respectively, in contrastive conjunction sentences of the form ''$\mathbb{A} \text{ but } \mathbb{B}$''. In all queries, the ``[CLS]'', ``[SEP]'', and punctuation tokens are not considered.} \label{table:perform_qplus_contr_conj}
  \end{table}

Our method, SymbXAI, outperforms other approaches across all tasks, with a particularly notable margin on the Movie Reviews dataset. This success stems from our ability to construct queries that capture both the presence and absence of features during the flipping procedure, which classical explanation methods fail to consider. The relevance of these queries is highly effective in predicting the model's behavior when flipping text data.
In \ref{app:additional_perturbation_analysis}, we provide the explicit flipping curves in Figure~\ref{fig:perturbation_cuves_sst} and \ref{fig:perturbation_cuves_imdb}. For first-order methods, like \textit{LRP} and \textit{PredDiff}, the areas under the flipping curves are identical for both removal and generation tasks, differing only in maximization or minimization scenarios. This is because the sequence of model inputs, and thus the flipping curve values, are the same for AURC and AUGC, but just in reverse order. In contrast, SymbXAI models removal and addition of features differently. This shows the flexibility and applicability of our framework in addressing specific challenges related to input flipping.

\paragraph{Automatic query search for contrastive conjunctions}\label{sect:automat_nlp_cont_conj}
In this subsection, we perform the search algorithm to find most expressive queries for the model's prediction, as proposed in Section \ref{sec:automatization}. To be able to interpret the result of the query search we focus on  on a specific subset of the SST dataset, namely contrastive conjunctions \cite{socher2013recursive}.

Contrastive conjunctions are sentences that  adheres to the structure ``$\mathbb{A}$ `but' $\mathbb{B}$'', where $\mathbb{A}$ and $\mathbb{B}$ represent two distinct phrases. $\mathbb{A}$ and $\mathbb{B}$ exhibit contrasting sentiments (excluding neutral sentiment), where the sentiment of $\mathbb{B}$ aligns with the target class and $\mathbb{A}$ has the opposite sentiment. An example of such sentence is ``Strange it is, but delightfully so.'', which is the first example in  Figure \ref{subfig:quali_contr_conj_auto_nlp}. For this example, we have $\mathbb{A}= ``\text{Strange it is,}$'' and  $\mathbb{B}= ``\text{delightfully so.}$''.

To control the number of queries in Equation \eqref{eq:opt_automat_general}, we introduce small restrictions on the composition of  $q \in \mathcal{Q}$ for our NLP data. Each subsentence identifier, i.e., the identifier of a feature set $\mathcal{S}$, must correspond to consecutive words within the sentence, and each word in any subsentence identifier can only occur once in the query. Additionally, the $\wedge$ operator can appear at most twice in  $q$. These restrictions reduce the number of queries and make the queries more readable for users.
\medskip

The results of of the query search in this setting are given in Figure \ref{subfig:quali_contr_conj_auto_nlp} and Table~\ref{table:perform_qplus_contr_conj}. In Figure \ref{subfig:quali_contr_conj_auto_nlp}, we observe that for both examples, the most expressive queries indicate that the model identifies the presence of words from subsentence $\mathbb{B}$ and the absence of words from subsentence $\mathbb{A}$ as important factors for its predictions. For example, in the first sentence, considering $``\text{delightfully}"$ which is a subsentence in $\mathbb{B}$,  and $``\text{Strange it is}"$ which is a subsentence in $\mathbb{A}$, the search algorithms finds the  query `$ \neg (\text{Strange it is}) \wedge \text{delightfully}$' to be most expressive. We obtain a similar result for the second sentence.

\begin{figure*}
    \centering
    \includegraphics[width=.65\textwidth]{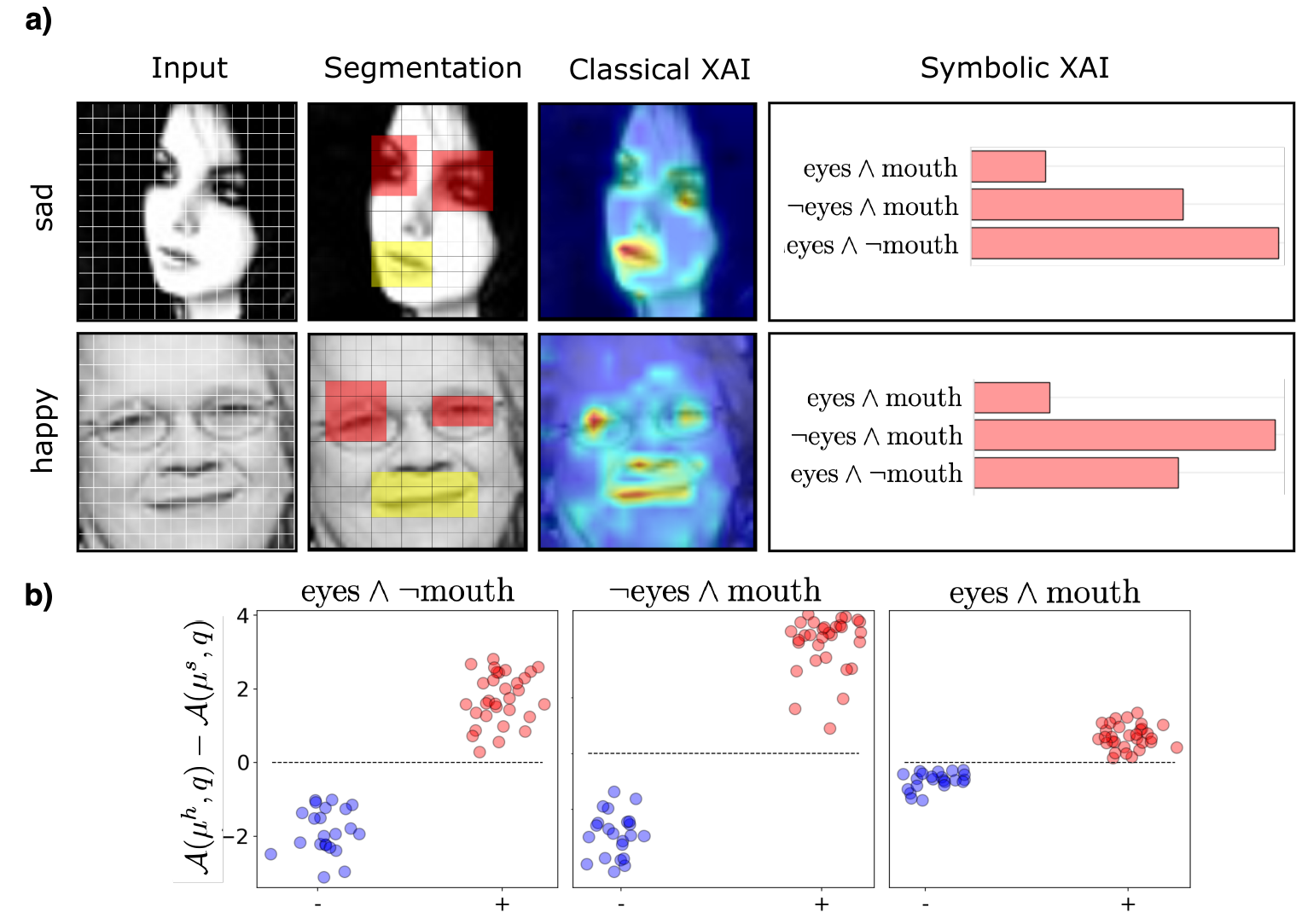}
    \caption{Application of SymbXAI on a ViT-Base model using the FER-2013 dataset. Sub-figure \textbf{a)} presents a table showing the results for exemplary images categorized as `\textit{sad}' and `\textit{happy}'. For each sample, the segmentation results (red for `eyes', yellow for `mouth'), the heatmap of classical feature-wise explanation and the result of the SymbXAI framework are given.
    \textbf{b)} shows the difference between the relevance of queries for the happy and sad classes, i.e., $\mathcal{A}(\bm{\mu}^h, q) - \mathcal{A}(\bm{\mu}^s,q)$. Here, $\bm{\mu}^h$ and $\bm{\mu}^s$ represent the multi-order decomposition of the model's output for the happy and sad classes, respectively. In \textbf{b)} the samples with the target class `\textit{sad}' are annotated in blue, while those with the target class `\textit{happy}' are annotated in red.
    Throughout the image, the specifications `eyes' and `mouth' correspond to the feature sets $\mathcal{S}_e$ and $\mathcal{S}_m$, which represent the eyes and mouth of the face, respectively. In all of the explanation experiments, we use the `\textit{occlusion}' weight vector $\bm{\eta} = 1$.}
    \label{fig:vision_summary}
\end{figure*}

In Table \ref{table:perform_qplus_contr_conj}, we quantitatively evaluate the most expressive query with respect to contrastive conjunction sentences. We can see in the majority of examples, the most expressive query $q^+$ includes words $\mathbb{b} \subseteq \mathbb{B}$ (83.9\%) and exclude words $\mathbb{a} \subseteq \mathbb{A}$ (93.5\%), which is consistent with the intuition developed earlier.

We observe that for predicting the sentiment of contrastive conjunctions (sentences that include the word "but" with contrasting subsentences), Symbolic XAI suggests a simpler approach: focus only on the subsentence following the word "but." By basing the sentiment prediction of the entire sentence solely on this subsentence, we see an accuracy loss of only 0.24\% on 410 samples, decreasing from 84.63\% accuracy when incorporating the full sentence to 84.39\% when predicting the sentiment using only $\mathbb{B}$. Remarkably, this insight was obtained by analyzing just 36 samples. This demonstrates that our explanation framework can extract valuable insights from an ML model using a relatively small set of samples.

In summary, the model is aware of the contrastive nature of sentences. 
The SymbXAI query search shows that the model ignores in the above sentence structure the first part of the sentence $\mathbb{A}$ and focuses on the second part $\mathbb{B}$, to compose the prediction. 
This aligns with human intuition of how one should interpret contrastive conjunction sentences.

\subsection{Usage in vision}
Computer Vision (CV) is one of the central domains heavily using ML (e.g., \cite{lecun1998doc_rec,krizhevsky2012alexnet,deng2009imagenet}). Multiple tasks fall under this domain, including object detection \cite{zhao2029obj_detect}, character recognition \cite{lecun1998doc_rec}, or facial expression recognition \cite{ko2018fer_review}, to name a few. Different ML models, including convolutional neural networks \cite{lecun1998doc_rec,lecun95convolutional, krizhevsky2012alexnet}, Transformer models \cite{vaswani2017attention,dosovitskiy2021ViT} and others \cite{lampert2009kernelCV, hochreiter1997long}, gained astonishing performance on many of the CV tasks. 
A number of explanation strategies has been proposed for vision models \cite{samek2021xaireview,montavon-pr17, bach-plos15, bas2022xai_med_image}, where most of them provide pixel-wise heatmaps to highlight the pixels that are relevant for the prediction.

\paragraph{Dataset}
For the FER task, we focus on the FER2013 dataset \cite{goodfellow2013fer2013}, which contains approximately 30K images of seven different facial expressions: \textit{sad}, \textit{disgust}, \textit{angry}, \textit{neutral}, \textit{fear}, \textit{surprise}, \textit{happy}. 

\paragraph{Experimental setup}
To demonstrate the efficacy of our proposed framework, we leverage it to explain the predictions of a pre-trained vision transformer model \cite{dosovitskiy2021ViT}, namely ViT-Base, which is finetuned on a FER dataset \cite{goodfellow2013fer2013}.  The ViT-Base reaches with around 90\% classification accuracy on the FER2013 dataset. 
For more details on the model, we refer to   \ref{app:cv_model_details}. The analysis is twofold. First, we show in Section \ref{sec:FER} both qualitatively and quantitatively how to construct meaningful queries and calculate their relevance values, thereby gaining insights into the model's prediction strategy. Second, we show in Section \ref{sec:hygienic_mask_eval} how our SymbXAI framework can help to estimate the model's performance on unseen data, such as faces that are partially obstructed, e.g. due to hygienic masks or sunglasses.

\subsubsection{Exploring query relevance on the FER task} \label{sec:FER}

We aim to explore the relevance of different queries to obtain a profound understanding of the model's prediction. In this subsection, we focus on two categories of this dataset, `\textit{sad}' and `\textit{happy}', which represent contrasting emotional states. Given the inherently opposing nature of these emotions, we anticipate that when a facial feature positively influences one emotion, it would inversely affect the other emotion class.

In Figure \ref{fig:vision_summary}\textbf{a)} we see how the SAM algorithm properly segments the eyes and mouth of the corresponding input image. The results from the classic XAI method indicate that the model particularly focuses on these segmented areas. With SymbXAI, we see that the conjunctive interaction between the eyes and mouth, i.e., $q = \text{eyes} \wedge \text{mouth}$, does not significantly influence the target prediction for both classes. The presence of the mouth without eyes, i.e., $q = \neg \text{eyes} \wedge \text{mouth}$, and the presence of the eyes without mouth, i.e., $q = \text{eyes} \wedge \neg \text{mouth}$, play significant roles. Specifically, the mouth without eyes ($q = \neg \text{eyes} \wedge \text{mouth}$) is more important for the happy image, while the eyes without mouth ($q = \text{eyes} \wedge \neg \text{mouth}$) is more important for the sad image. 
This can be a reasonable strategy when examining the images, as a smile is a strong indicator of happiness, whereas the mouth of a sad person can be interpreted more neutrally, with the eyes serving as a stronger indicator of sadness.

\begin{figure}[!htb]
    \centering
    \includegraphics[width=0.8\textwidth]{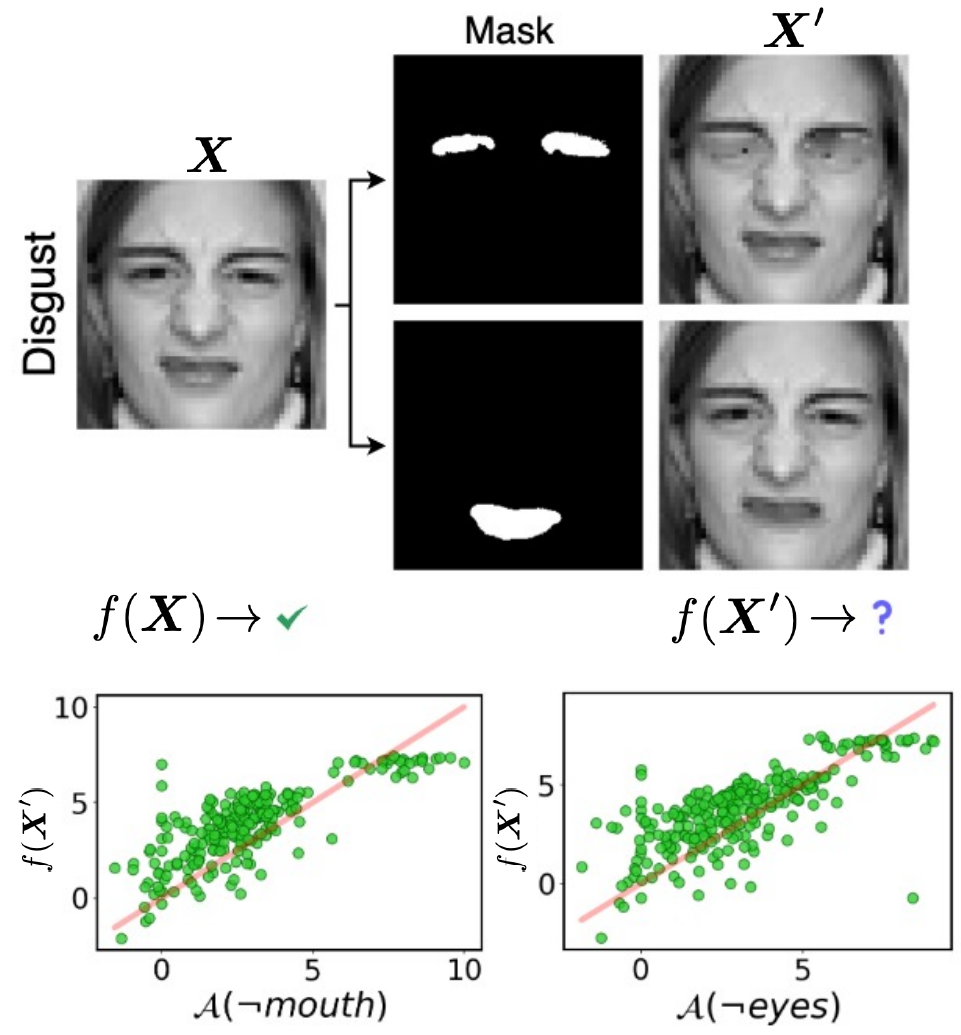}
    \caption{Employing the Symbolic XAI framework to preview the model's performance on unseen face images that are partially obscured. We use the ViT-Base model that is trained on the FER-2013 dataset. $\mathcal{A}( \neg \text{mouth})$ and $\mathcal{A}(\neg \text{eyes})$ is the relevance of the queries $q=\neg \text{mouth}$ and $q=\neg \text{eyes}$ with respect to the multi-order decomposition $\bm{\mu}$ of the prediction $f(\bm{X})$ for the original image $\bm{X}$. $f(\bm{X}')$ is the prediction of the model for the inpainted image $\bm{X}'$.}
    \label{fig:vision_mouth_coverage_summary}
\end{figure}

In Figure \ref{fig:vision_summary}\textbf{b)} we analyze the complimentary nature of the \textit{sad} and \textit{happy} classes, by considering the contributions of the eyes and mouth segments to the happy class minus their contributions to the sad class, i.e., $\mathcal{A}(\bm{\mu}^h,q) - \mathcal{A}(\bm{\mu}^s,q)$, for different queries $q$. In this figure, images that belong to the \textit{sad} class are represented with blue blobs and those in the \textit{happy} class are shown as red blobs. We can observe that when the eyes  and mouth segments, and also their conjunctive interaction contribute positively to the \textit{sad} class,  they either contribute negatively or less positively to the \textit{happy} class, and vice versa. These observations confirms our intuition that eyes and mouth express an emotion. Moreover, when looking at the relevance of the query $q =\text{mouth} \wedge \neg \text{eyes}$, we note that the mouth segment has a higher contribution to the `\textit{happy}' class, which shows that overall the model finds the mouth to be a good indicator for happiness. A similar pattern is observed for the query $q = \neg \text{mouth} \wedge \text{eyes}$ in relation to the \textit{sad} class, although this trend is slightly less pronounced.

\subsubsection{Preview performance on augmented data using Symb- XAI}\label{sec:hygienic_mask_eval}
During the COVID-19 pandemic, it became evident that certain face recognition models encounter difficulties in accurately identifying faces obscured by hygienic masks \cite{MAGHERINI2022covidMasks, elsayed2023FER_masked}. Therefore, it is crucial for these models to be robust on such obscured images. Similarly, facial expression recognition models can also benefit from such robustness, ensuring precise recognition of facial expressions even when facial features are partially obscured due to facial hair, accessories, hygienic masks, or other obscuring factors.  In this experiment, we use the pre-trained ViT-Base model and FER-2013 dataset, which were introduced in Section~\ref{sec:FER}. Note that for all relevance values, we used the constant weight vector $\bm{\eta} = 1$.

A common method to assess model's robustness involves data augmentation by blocking out semantically meaningful patches of the input. However, this strategy, if not carefully implemented, can introduce unnatural artifacts. For instance, a solid black or white patch introduces sharp edges in the image, that can differ significantly from any natural way of obscuring the image (like shadows or other objects) which the model can usually handle. 
 
An example of an augmentation strategy using a simple inpainting method \cite{talea2004inpainter} is given in Figure \ref{fig:vision_mouth_coverage_summary}. 

As mentioned in Section~\ref{sec:FER}, mouth and eyes are important facial features for the FER task. Consequently, obscuring these areas, such as when wearing a hygienic mask or sunglasses, is expected to impact the model's performance. To analyze this hypothesis, we use our SymbXAI framework. This framework enables us to preview the model's performance on obscured data without directly modifying the images, providing valuable insights into the model's robustness when they are obscured. 

We use the queries $q=\neg \text{mouth}$ and $q=\neg \text{eyes}$, which considers all facial features except for the mouth and eyes, respectively. We represent the relevance of these queries with respect to the multi-order decomposition $\bm{\mu}$ of prediction $f(\bm{X})$, for the original image $\bm{X}$, by $\mathcal{A}(\neg \text{mouth})$ and $\mathcal{A}(\neg \text{eyes})$.
In Figure~\ref{fig:vision_mouth_coverage_summary}, we compare the relevance values $\mathcal{A}(\neg \text{mouth})$ and $\mathcal{A}(\neg \text{eyes})$ against the model's prediction $f(\bm{X}')$ on the obscured images $\bm{X}^\prime$. We generate obscured images by detecting the mouth and eyes area using the SAM algorithm and subsequently inpainting these regions with the inpainting algorithm described in \cite{talea2004inpainter}. We can note a correlation between $\mathcal{A}( \neg \text{mouth})$ and $f(\bm{X}')$. This means that the SymbXAI framework is able to reflect how the model performs when mouth area of the images are obscured, and is therefore useful to preview how a model can be generalized to unseen data. We can observe a similar correlation between $\mathcal{A}( \neg \text{eyes})$ and $f(\bm{X}')$.

\subsection{Usage in quantum chemistry} \label{sect:qc_experiments}
The usage of ML models in quantum chemistry are experiencing a high popularity due to their high accuracy with relatively small computational costs~\cite{rupp2012fast,schutt2018schnet, schutt2017schnet, schutt2021equivariant, batzner20223, gasteiger_dimenet_2020, thomas2018tensor, gilmer2017neural, musaelian2023learning, unke2022accurate, han_deep_2017, unke2019physnet, unke2021spookynet, batzner20223, klicpera2020directional, NEURIPS2022_4a36c3c5, frank2022so3krates, NEURIPS2020_15231a7c, thomas2018tensor, haghighatlari2022newtonnet, gugler2022, kahouli2024}.
To ensure the generalizability of these ML models, it is imperative to move beyond simple prediction errors and employ more sophisticated metrics~\cite{unke2021machine, fu2022forces, frank2024euclidean}. It is desirable for these metrics to encompass broader physical concepts and resonate with the intuition of experts in their respective research domains. As a consequence, the significance of XAI is growing within quantum chemistry~\cite{schuett2019xaiqc, schuett2017dtnn, cho2020lrpInterNet, mcclo19usingatt, collins2023xaiMolFrag,bonneau2024peering}. XAI has already demonstrated its utility in predicting toxicity or mutagenicity~\cite{debnath1991structure,baehrens2010explain,Kazius2005DerivMutag, xiong22asubgraph, xiong23bMaxProd, luo2020parameterized}. Moreover, XAI is able to guide strategies in drug discovery~\cite{jose20drugdisc}.

As discussed in Section \ref{sect:back_rel}, many of the standard XAI techniques are limited in their expressiveness.

To illustrate this, consider a common scenario: One prevalent form of post-hoc explanation assigns a scalar value to each atom, indicating its significance for the model's prediction. Yet, when the model predicts the potential energy of a molecule, providing only atom-wise energy contributions is in general not sufficient. Since a great portion of the energy stems from interactions between the atoms, not individual atoms themselves. 
We use the SymbXAI to inspect the relevance of different atom interactions.

\begin{figure*}[h]
        \centering
  \begin{center}
    \includegraphics[width=.8\textwidth]{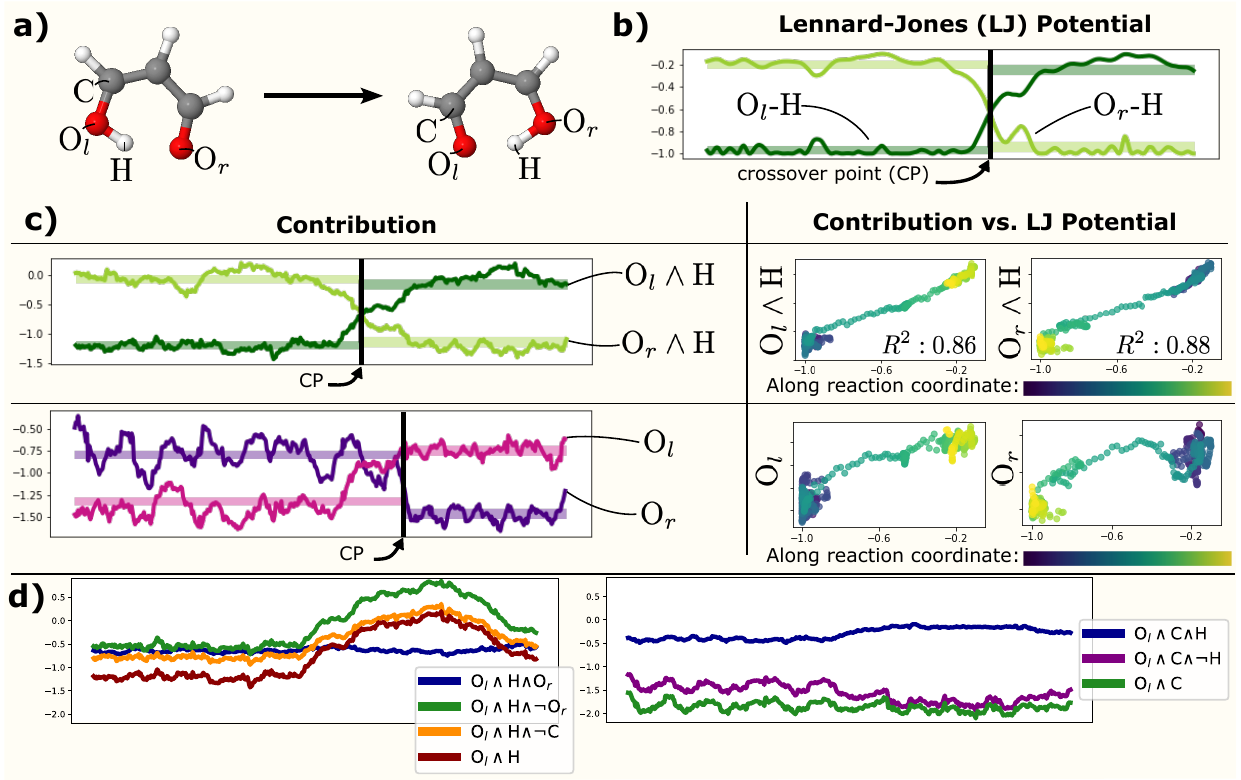}
  \end{center}
  \caption{MD simulation of malondialdehyde. \textbf{a)} shows a proton transfer reaction of MDA. The left conformation is in the early time points of the MD trajectory, the right conformation is close to the final time point. In \textbf{b)}, the Lennard-Jones potential over time between the oxygen and hydrogen atoms on the left (dark green) and right (light green) is visualized. In the table, \textbf{c)}, we show the contribution for two different queries, which contributions are compared to the Lennard-Jones potential. In the first row, the relevance of the conjunctive query between the oxygen and hydrogen atom, is shown.
  The second row shows the classic first-order relevance of the left and right oxygen atom, expressed by the first-order relevance of O$_l$ and O$_r$. Note that in all results we apply the occlusion values for queries, i.e., $\bm{\eta} = 1$. In \textbf{d)} we show the relevance of different queries that are composed of the left oxygen O$_l$ and hydrogen atom H, and of the left oxygen atom O$_l$ and carbon atom C. } \label{fig:qc_intro_image1}
\end{figure*}

\paragraph{Interpreting the multi-order model decomposition}
In quantum chemistry a well known theoretical concept is the many-body expansion \cite{ouyang2014troubleMBE}, expressed by
\begin{align}\label{eq:mbe_chem}
    E = \sum_I E_I + \sum_{I<J} \Delta E_{IJ} + \sum_{I<J<K} \Delta E_{IJK} + \dots
\end{align}
where, $\Delta E_{ I J } := E_{ I J } -  E_{ I} -  E_{ J }$ and $\Delta E_{ I J K} : = E_{ I J K } - \Delta E_{ I J } - \Delta E_{ J K } - \Delta E_{ I K } - E_{ I } -  E_{ J } -E_{ K } $. The term $E_I$ represents the energy contribution of the atom $I$, while $\Delta E_{IJ}$ and $\Delta E_{IJK}$ are two- and three-body correction terms for the atom pair $IJ$ and triplet $IJK$, respectively. In general, for any set of atoms $IJ \dots$, the energy contribution $\Delta E_{IJ \dots}$ is expected to be distinctively dependent on the specified atoms and independent of any other atoms.

It becomes clear that such many-body expansion is conceptually similar to the multi-order decomposition with the Harsanyi dividends in Section \ref{subsect:decompo_model}, meaning that $\Delta E_\mathcal{L} = \mu_\mathcal{L}$ when the model $f(\bm{X})$ predicts the energy. 

\medskip

We present a few experimental scenarios in which we construct and compute the relevance of queries that are useful in the context of quantum chemistry, with a  focus on the comparison between the queries and the many-body terms.

\subsubsection{Explanation of a molecular dynamics trajectory}\label{sec:mda_traj}

We consider the molecular dynamics (MD) trajectory of a proton transfer within malondialdehyde (MDA) in its enol form. Enol form means that one of the hydrogen atoms is bonded to an oxygen atom, which introduces a double bond between two carbon atoms. During this reaction, the hydrogen atom dissociates from the oxygen atom and subsequently forms a new bond with the adjacent oxygen atom (see Figure \ref{fig:qc_intro_image1}\textbf{a)}).

We anticipate that this process would be primarily governed by pairwise interactions between the hydrogen and oxygen atoms that are involved in the reaction. We want to validate this expectation by explaining a ML force field model used for a MDA simulation: We train a SchNet model \cite{schuett2017dtnn, schutt2017schnet} equipped with three interaction blocks on the MDA dataset \cite{schuett2019schnorb} utilizing the software package SchNetPack~\cite{schuett2018schnetpack, schutt2023schnetpack}. This results in a satisfactory accuracy of \SI{0.35}{\milli\electronvolt} for the energy and \SI{2.16}{\milli\electronvolt/\angstrom} for the forces over the course of a trajectory (see   \ref{app:qc_train_data}).

In Figure \ref{fig:qc_intro_image1}\textbf{b)}, we see the Lennard-Jones (LJ)  potential \cite{Lennard_Jones_1931, Jones1924} between the hydrogen and the oxygen atom on the left, O$_l$-H and on the right, O$_r$-H (for details on LJ potential, see~\ref{app:len_jones}). This sub-figure also illustrates a constant approximation and the \textit{crossover point} (CP) of the hydrogen atom: The constant approximations are two step functions, each of which exhibits one step at the same time point, so that the square distance between both time series and their constant approximations is minimized.

To analyze the behavior of the SchNet model with respect to the chemical reaction, we consider three different abstract queries within the SymbXAI framework. We then compare their relevance with the LJ potential between the oxygen and hydrogen atoms that undergo the proton transfer. Generally, both the LJ potential between atoms and the joint contribution of atom pairs serve as proxies for the energy contribution to the overall energy of the molecule.  Therefore, we assess that if these quantities correlate, the model aligns with chemical intuition.

The results in Figure \ref{fig:qc_intro_image1}\textbf{c)} reveal that the contextualized conjunctive contributions O$_l \wedge $H and O$_r \wedge $H have strong positive linear correlation with the LJ potential (see $R^2$ scores in scatter plot), while first-order explanation of the oxygen atoms O$_l$ and O$_r$, lead to no linear correlation with the LJ potential.
The first-order contribution of the oxygen atoms has only a weak correlation with the LJ potential, and for values that are distant to the CP no correlation is visible.
We also see that the CP aligns well with the CP of the LJ potential.
This shows qualitatively that the proton transfer is best captured with the given symbolic queries.

Figure \ref{fig:qc_intro_image1}\textbf{d)} shows additional query relevance of the MDA trajectory. On the left, queries O$_l \wedge $H$ \wedge \neg$O$_r$, O$_l \wedge $H $ \wedge \neg$C, and O$_l \wedge $H show a similar trend, remaining stable initially and increasing in later time points. This behavior aligns with the LJ potential, as discussed in Figure \ref{fig:qc_intro_image1}\textbf{c)}. The absence of C and O$_r$, shown by the orange and green lines, respectively, causes an energy shift towards greater instability (positive energy). The relevance for the query  O$_l \wedge $H$ \wedge $O$_r$ does not change much over time. We interpret this 3-body term relevance as primarily governed by the pairwise contributions of queries O$_l \wedge $H and  O$_r \wedge $H. The sum of the relevance for these atom pairs results in an almost constant function over time, as well.

On the right side of Figure \ref{fig:qc_intro_image1}\textbf{d)} we see contributions of queries that involve the C and O$_l$ atoms.
At the trajectory's start, there is a single bond between two atoms.
Over the course of the reaction, this bond transitions to a double bond, which leads to a shortening of the bond.
The queries O$_l \wedge $C$\wedge \neg$H and O$_l \wedge$C show a slight drop in relevance over time, resembling the LJ potential behavior. The absence of the H atom reinforces this relevance difference after the CP. This is natural since we know from previous investigations that the interaction between the H and O$_l$ atom has positive influence on the energy contribution after the CP, so that the absence naturally leads to more negative values. The relevance of the 3-body query O$_l \wedge $C$\wedge$H shows exactly this positive effect of the H atom after the CP.

SymbXAI effectively recovers and interprets chemically plausible dependencies in the model. This framework is therefore able to provide insights about the complex prediction strategy of ML models on QC tasks and has the potential to be applicable in other natural sciences. Hence, SymbXAI helps to determine whether the model aligns with chemical intuition and reveals interesting ML behaviors, which can even lead to new scientific insights.

\section{Conclusion and Future Work}
We found that it is possible to develop an explanation framework that attributes relevance to human-readable logical relationships in a model. Additionally, we designed a search algorithm that automatically identifies logical formulas that best summarize the model's prediction strategy. Across application domains such as vision, NLP, and chemistry, our novel framework offers insights into the abstract prediction strategies of complex AI models, tailored to the user's interests expressed in a formal language—an abstraction that was previously unattainable in this form.

A further finding is that both perturbation-based and propagation-based explanation methods can be utilized within our framework, extending the application of cooperative game theory, which was traditionally associated with perturbation-based methods.

Analyzing and interpreting deep learning in complex scenarios is challenging. Simple heatmaps from first-order XAI or even higher-order XAI methods may not present information in an abstract, problem-relevant manner. The quest for a formal foundation for more abstract knowledge representation in XAI frameworks is ongoing, with recent developments in concept-based XAI, among others \cite{achtibat23concept_rel, schramowski2020making}. Moreover, users may want to specify the level of abstraction that XAI techniques should provide, which is precisely what our proposed SymbXAI framework offers by leveraging logic. SymbXAI provides an interpretable abstraction of explanatory queries, consisting of abstract formulas composed of logical connectives and input features. 

In the future, it would be beneficial to explore and address the complexity of this framework. For instance, by distilling unimportant multi-order terms from the model prediction in Equation \eqref{eq:pred_decompo_multi_order_subset}, leading to sparser models, or by finding alternative ways to compute the relevance of queries beyond the multi-order terms. Further automation in identifying meaningful queries would be desirable. One avenue for exploration could involve comparing different similarity measures as in Equation \eqref{eq:opt_automat_general}, or developing efficient search algorithms for a large query space.

In summary, our work represents a step towards a human-centered explanation of AI models that is both flexible and provides human-readable explanatory features.

\paragraph{Authors contributions}
TS, GM, KRM and JL where involved in the conceptualization of the project. TS and FRJ managed the data curation of the experiments. KRM is responsible for the funding acquisition of this project. TS and FRJ conducted the investigation of the experiments of the NLP and vision experiments, where TS, JL and SG conducted the investigation of the experiments on the quantum chemical domain. TS, GM and SN developed the methodology of the framework, KRM contributed to it. KRM, GM and TS where responsible for the project administration and supervision. TS and FRJ developed the software that lead to the results and validated the outcome. TS and FRJ created all visualizations of the manuscript. TS, FRJ and JL wrote the original draft. GM, SN, KRM, SG
and PX conceptualized, iterated and discussed the project's experiments and analyses. GM, SN, KRM, SG and PX reviewed and edited the manuscript.

\paragraph{Declaration of generative AI and AI-assisted 
technologies in the writing process}
During the preparation of this work some of the author(s) (TS, FRJ and JL) used ChatGPT (OpenAI) in order to improve the readability and spelling of the text. After using this tool/service, the author(s) reviewed and edited the content as needed and take(s) full responsibility for the content of the publication.

\paragraph{Declaration of competing interest}
The authors declare that they have no known competing financial interests or personal relationships that could have appeared to influence the work reported in this paper.

\section*{Acknowledgment}
We gratefully acknowledge valuable discussions with Oliver Unke, Stefan Chmiela, Alexandre Tkatchenko, O. Anatole von Lilienfeld, Kristof Schütt, Michael Gastegger,  Stefan Blücher and Marco Morik. This work was in part supported by the Federal German Ministry for Education and Research (BMBF) under Grants BIFOLD24B, 01IS14013A-E, 01GQ1115, 01GQ0850, 01IS18025A,\\ 031L0207D, and 01IS18037A. K.R.M.\ was partly supported by the Institute of Information \& Communications Technology Planning \& Evaluation (IITP) grants funded by the Korea government (MSIT) (No. 2019-0-00079, Artificial Intelligence Graduate School Program, Korea University and No. 2022-0-00984, Development of Artificial Intelligence Technology for Personalized Plug-and-Play Explanation and Verification of Explanation). Correspondence to TS, GM and KRM.

\bibliographystyle{abbrvnat}
\bibliography{references}

\newpage
\onecolumn
\appendix

\section{Notation}
\setlength{\tabcolsep}{10pt} 
\renewcommand{\arraystretch}{1.5} 
\begin{table}[!htb]
    \centering \small
    \begin{tabularx}{\textwidth}{c|X}
    \toprule
        $\bm{X}$ & Input sample for the ML model. $
        \bm{X}= (x_I)_{I \in \mathcal{N}}$ \\
        $\bm{X},x$ & Vector/matrix, scalar\\
        $\mathcal{N} = \{ 1, \dots, n \}$ & Set of input feature indices  \\
        $\mathcal{S}, \mathcal{L}$ & Subset of input feature indices, i.e., $\mathcal{S}, \mathcal{L} \subseteq \mathcal{N}$ \\
          $I,J,K \dots $ & Single index of features. $I,J,K \dots \in \mathcal{N}$\\
          $\mathcal{L} \subseteq \mathcal{S}$ & $\mathcal{L}$ is any subset of $\mathcal{S}$, including $\mathcal{L} = \mathcal{S}$\\
          $\mathcal{L} \subsetneq \mathcal{S}$ & $\mathcal{L}$ is a strict subset of $\mathcal{S}$, therefore $\mathcal{L} \neq \mathcal{S}$\\
           $2^\mathcal{S}$ & Power set of $\mathcal{S}$, i.e., $2^\mathcal{S} := \{ \mathcal{L}: \, \mathcal{L} \subseteq \mathcal{S} \} $ \\
          $\mathbbm{1}(\mathcal{L})$ & Indicator function for a set $\mathcal{L}$, i.e., it is 1 if $\mathcal{L} \neq \varnothing$ and else 0 \\
         $\overline{\mathcal{S}}$ & Complement of $\mathcal{S}$, i.e., $\overline{\mathcal{S}} = \mathcal{N} \setminus \mathcal{S}$\\
         $|\mathcal{S}|$ & Number of elements in the set $\mathcal{S}$ \\
         $\mathcal{W} = (I,J, K \dots)$ & Sequence of features also called walk.\\
         $\times $ & Cartesian product\\
         \bottomrule
    \end{tabularx}
    \caption{Notation throughout the paper}
    \label{tab:notation}
\end{table}

\section{Proof of Lemma \ref{lemm:conserv_query}} \label{app:proof_lemma_shap_weight}
\begin{proof}
From the definition of the query relevance $\mathcal{A}(q)$ we obtain
    \begin{align*}
         \sum_{k=1}^M \mathcal{A}(\bm{\eta}, \bm{\mu},  q_k) &= \sum_{k=1}^M \sum_{\mathcal{L} \subseteq \mathcal{N} }  \eta_\mathcal{L} \cdot  \mu_\mathcal{L} \cdot \lambda_\mathcal{L}( q_k) 
    \end{align*}
    Now, pulling the outer sum inside and using the definition of $\eta_\mathcal{L}$ we obtain
    \begin{align*}
         \sum_{k=1}^M \mathcal{A}(\bm{\eta}, \bm{\mu},  q_k) &=  \sum_{\mathcal{L} \subseteq \mathcal{N} }  \cancel{\eta_\mathcal{L}} \cdot  \mu_\mathcal{L} \cdot \cancel{[ \sum_{k=1}^M \lambda_\mathcal{L}( q_k)  ] }\\
         &=  \sum_{\mathcal{L} \subseteq \mathcal{N} }   \mu_\mathcal{L}\\
         & = f(\bm{X})
    \end{align*}
    where the last equation comes from the natural properties of the Harsanyi dividends.
\end{proof}

\section{More Details on Extracting Higher-Order Feature Relevance} \label{app:ho_xai}

\subsection{The machine learning model}
In our investigation, we focus on a particular class of neural network architectures, including graph neural networks (GNN) \cite{Scarselli2009GNN, Bronstein17GeomDL, kipf2017semisupervised}, Transformers \cite{vaswani2017attention, devlin2019bert, radford2018gpt}, and neural ODEs \cite{chen2018neuralODE, behrmann19InvResNet}. 
GNNs typically have a graph as input, while Transformer models can learn from various input types, such as text, images, and others. In all cases, the input sample is pre-processed into $n$ token embeddings of dimension $d_0$, resulting in an input matrix $\bm{H}_0 \in \mathbb{R}^{n \times d_0}$. These tokens represent nodes in a graph, words in a sentence, or predefined patches in an image. The token embeddings are learnable and trained during the model’s learning task.

Both GNNs and Transformer models generally consist of an \emph{encoder} and a \emph{decoder}. The encoder takes the token embeddings $\bm{H}_0$ as input and forwards them through $T$ \emph{building blocks} $\mathcal{B}_t: \mathbb{R}^{n \times d_t} \rightarrow \mathbb{R}^{n \times d_{t+1}}$. This produces a latent representation of dimension $d_t$ for all $n$ tokens in each layer $t= 1, \ldots, T$, resulting in a matrix $\bm{H}_t \in \mathbb{R}^{n \times d_t}$. The decoder, specified as the function $g$, takes the output of the encoder $\bm{H}_T$ as input and processes the target prediction $y$.

Formally, this class of models follows the forward propagation scheme:
\begin{align}
&\text{encoder: } &\bm{H}_t &= \mathcal{B}_t\big( \bm{H}_{t-1} \big) & t = 1, \dots, T \label{eq:encoder_DPM} \\
&\text{decoder: } &y &= g\big(\bm{H}_T\big)\label{eq:decoder_DPM}
\end{align}
The encoder and decoder contain learnable parameters which are trained during the learning process. The building blocks are comparable to layers in a feed-forward neural network. In GNNs, the building blocks also incorporate the structure of the input graph, which is discussed further in \cite{schnake22gnnlrp}. The encoder and decoder usually consists of another set of neural network layers, such a feed-forward neural networks, but also Attention heads or LayerNorm \cite{vaswani2017attention}. 

In \cite{schnake22gnnlrp}, the authors demonstrate how we can leverage the fact that each building block $t$ contains a latent representation $\bm{H}_{t,I}$ of an input token $I$. In practice, this vector can be associated with a processed representation of the input token, enabling us to specify a higher-order explanation method for these types of models.

\subsection{Introducing multi-linear local surrogates}
\label{sect:build_mls}

We build the multi-linear local surrogates (MLS) gradually by substituting the nonlinear components of the model
by linear components. Our approach avoids any time-consuming retraining steps. However, it has hyperparameters that can be tuned to achieve the desired characteristics of the MLS. Such substitution is informed by the data used to collect the activations, thereby bringing locality.

We now provide an exemplary computation of the MLS for a GNN model with linear aggregation and feed forward combine function (e.g. given in \cite{kipf2017semisupervised}). In this context, we differentiate between two components in each building block $\mathcal{B}_t$: the \textit{combine function} and the \textit{aggregation function}. The \textit{combine function} is a function that acts on each feature representation individually, without the incorporation of other feature representations. The \textit{aggregation function} aggregates information across the input features. This distinction was also introduced by Gilmer et al. 2017 \cite{gilmer2017neural}.

We can then specify the combine function as
\begin{align}
	\bm{H}_{K,a}^{(t)} = \sigma(W_a^\top \bm{Z}_K^{(t-1)})
\end{align}
where $\sigma$ is a function that crosses the origin. We build the MLS by applying the following substitution at each layer
\begin{align}
	\bm{H}_{K,a}^{(t)} = \hat{V}_{K,a}^\top \bm{Z}_K^{(t-1)} \quad \text{with} \quad \hat{V}_{K,a}= \rho(W_a) \cdot \frac{\sigma(W_a^\top \bm{Z}_K^{(t-1)})}{\rho(W_a)^\top \bm{Z}_K^{(t-1)}}
\end{align}
Here, $\rho$ is a positive homogeneous function applied element-wise, and we use the convention that $0/0=0$. The linear aggregation function can be expressed by $\bm{Z}^{(t)}_K  = \sum_{J } \lambda_{JK} \bm{H}^{(t-1)}_J $, where $\lambda_{JK}$ represents the connectivity weight between nodes $J$ and $K$.
For this aggregation, we do not do any substitution.
Together, we obtain an alternative forward formulation, or alternative building block, of the model by
\begin{align}
	\bm{H}_K^{(t)} &=  \sum_{J} \underbrace{\lambda_{JK} \hat{\bm{V}}_K^{(t)}}_{=\bm{V}_{JK}^{(t)}} \bm{H}_J^{(t-1)} \label{eq:multilin_model}
\end{align}
Similarly, we proceed with the output function $g\big(  (\bm{H}_L^{(T)})_L \big)$ and obtain a locally linear function $\hat{g}$, such that $\hat{g}((\bm{H}_L^{(T)})_L \big)) = f(\bm{X})$.

We stress that almost any model, that can be expressed by Equations \eqref{eq:encoder_DPM} and \eqref{eq:decoder_DPM}, can be transformed into a multi-linear local surrogate of the form given in Equation \eqref{eq:multilin_model}. For more details on how other models can have such a representation, we refer to \cite{schnake22gnnlrp}.

\subsection{Using LRP for higher-order feature relevance}
The biggest bottleneck in making neural network predictions understandable lies in different non-linearities present in the model, such as neural activation functions \cite{DUBEY2022activation}, gates \cite{hochreiter1997long,cho2014gru}, or attention heads \cite{bahdanau2015, vaswani2017attention} in each layer. This increases the complexity of the model massively and makes its predictions intractable for a human. Yet, we argue that for multi-linear models, the contributions of feature index sets and higher-order interactions can be explicitly extracted.

As specified by the MLS, layer-wise relevance propagation (LRP) is a linearization method for neural networks. It provides a local approximation of the model's prediction at the input $\bm{X}$.
Within the model's description from Equations \eqref{eq:encoder_DPM} and \eqref{eq:decoder_DPM}, there exists therefore  matrices $\bm{V}_{JK}^{(t)}$ and  $\hat{g}$ such that
\begin{align*}
     \bm{H}_K^{(t)} =  \sum_{J} \bm{V}_{JK}^{(t)} \bm{H}_J^{(t-1)} \hspace{1cm} t= 1, \dots T \hspace{.3cm}\text{and } \hspace{1.5cm}
     f(\bm{X}) = \hat{g}\big(\bm{H}^{(T)}\big)
\end{align*}
With this expression we can deduce a simple strategy to extract high-order explanatory features.

\paragraph{Higher-order layer-wise relevance propagation (HO-LRP)}
For the propagation scheme introduced in Equation \eqref{eq:multilin_model}, we can extract features of the form $R_\mathcal{W}$ for a chain of feature indices $\mathcal{W}=(I,J, \dots L)$. This is simply given by $ R_{IJ \dots L} = \langle \hat{g}_L\bm{V}^{(T)}_{L\cdot} \dots \,  \bm{V}^{(2)}_{{\cdot J}} \bm{V}_{{JI}}^{(1)},\bm{H}^{(0)}_I \rangle $. We expect that $R_{\mathcal{W}}$ is mainly dependent on the features indexed by the elements in $\mathcal{W}$. Although this expectation is not rigorously true for models that incorporate non-linearities in their building blocks $\mathcal{B}_t$, this is statistically a quite accurate assumption, as we will demonstrate later. This method is referred to as \textit{higher-order layer-wise relevance propagation} (HO-LRP), extending GNN-LRP \cite{schnake22gnnlrp} to a broader class of models that fulfill Equations \eqref{eq:encoder_DPM} and \eqref{eq:decoder_DPM}.

\section{Link Between Propagation- and Perturbation-based Methods} \label{app:express_multi_with_walks}
We aim to formally show under which circumstances the specification of $\mu$ in Equations \eqref{eq:multi_order_gnnlrp} and \eqref{eq:multi_ord_def_moebius} coincide. We formulate it in a Lemma:

\begin{lemma}
    If we use the subset relevance  $R_\mathcal{L} = \sum_{\mathcal{W} \in \mathcal{L} \times \mathcal{L} \times \dots } R_\mathcal{W}$ as an estimate for the model's prediction $f(\bm{X}_\mathcal{L})$ for a set of features $\mathcal{L}$, then the Harsanyi dividends in Equation \eqref{eq:multi_ord_def_moebius} coincide with the terms in Equation \eqref{eq:multi_order_gnnlrp}. Formally, we want to show that for any subset $\mathcal{L}$:
    \begin{align}\label{eq:lemma_harsanyi_propagation_link}
        \sum_{\mathcal{W}: \, \text{set}(\mathcal{W}) = \mathcal{L}} R_\mathcal{W} = \underbrace{\sum_{\mathcal{S} \subseteq \mathcal{L}} \left( -1 \right)^{|\mathcal{L} \setminus \mathcal{S}|} R_{\mathcal{S}}}_{=: \Delta R_\mathcal{L}}
    \end{align}
\end{lemma}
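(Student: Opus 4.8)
The plan is to prove the identity \eqref{eq:lemma_harsanyi_propagation_link} by a direct substitution of the definition $R_\mathcal{S} = \sum_{\mathcal{W} \in \mathcal{S} \times \mathcal{S} \times \dots} R_\mathcal{W}$ into the right-hand side $\Delta R_\mathcal{L}$, followed by a Möbius-inversion / inclusion-exclusion argument. First I would rewrite the right-hand side as
\begin{align*}
\Delta R_\mathcal{L} = \sum_{\mathcal{S} \subseteq \mathcal{L}} (-1)^{|\mathcal{L} \setminus \mathcal{S}|} \sum_{\mathcal{W} : \, \text{set}(\mathcal{W}) \subseteq \mathcal{S}} R_\mathcal{W},
\end{align*}
using the fact that a walk $\mathcal{W}$ lies in $\mathcal{S} \times \mathcal{S} \times \dots$ precisely when $\text{set}(\mathcal{W}) \subseteq \mathcal{S}$. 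Then I would swap the order of summation so that the outer sum runs over walks $\mathcal{W}$ with $\text{set}(\mathcal{W}) \subseteq \mathcal{L}$, collecting for each such $\mathcal{W}$ the coefficient $\sum_{\mathcal{S} :\, \text{set}(\mathcal{W}) \subseteq \mathcal{S} \subseteq \mathcal{L}} (-1)^{|\mathcal{L} \setminus \mathcal{S}|}$.

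The key step is then to evaluate this inner coefficient. Writing $\mathcal{T} = \text{set}(\mathcal{W})$, the sets $\mathcal{S}$ with $\mathcal{T} \subseteq \mathcal{S} \subseteq \mathcal{L}$ are in bijection with subsets $\mathcal{U} = \mathcal{S} \setminus \mathcal{T}$ of $\mathcal{L} \setminus \mathcal{T}$, and $|\mathcal{L} \setminus \mathcal{S}| = |\mathcal{L} \setminus \mathcal{T}| - |\mathcal{U}|$. Hence the coefficient equals $(-1)^{|\mathcal{L} \setminus \mathcal{T}|}\sum_{\mathcal{U} \subseteq \mathcal{L} \setminus \mathcal{T}} (-1)^{|\mathcal{U}|}$, which by the standard binomial identity $\sum_{\mathcal{U} \subseteq \mathcal{F}} (-1)^{|\mathcal{U}|} = \mathbbm{1}(\mathcal{F} = \varnothing)$ vanishes unless $\mathcal{T} = \mathcal{L}$, in which case it equals $1$. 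Therefore only walks with $\text{set}(\mathcal{W}) = \mathcal{L}$ survive, giving $\Delta R_\mathcal{L} = \sum_{\mathcal{W} : \, \text{set}(\mathcal{W}) = \mathcal{L}} R_\mathcal{W}$, which is exactly the left-hand side.

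I do not expect a serious obstacle here; the argument is essentially the observation that $\Delta R$ is the Möbius transform (over the subset lattice) of $\mathcal{L} \mapsto R_\mathcal{L}$, and that $R_\mathcal{L} = \sum_{\mathcal{T} \subseteq \mathcal{L}} \big(\sum_{\text{set}(\mathcal{W}) = \mathcal{T}} R_\mathcal{W}\big)$ is the corresponding zeta transform, so the two cancel by Möbius inversion. The only point requiring a little care is bookkeeping: making sure the reindexing $\mathcal{W} \in \mathcal{S} \times \mathcal{S} \times \dots \iff \text{set}(\mathcal{W}) \subseteq \mathcal{S}$ is stated cleanly (walks have a fixed length determined by the model depth, so the sums are finite), and that the empty-walk / $\mathcal{L} = \varnothing$ edge case is handled consistently with the convention $f(\bm{X}_\varnothing) = 0$. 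I would present the proof either as the explicit double-sum swap above or, more compactly, by citing Möbius inversion on the Boolean lattice directly.
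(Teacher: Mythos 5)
Your argument is correct, and it takes a genuinely different (though closely related) route from the paper. The paper proves the identity by induction over $|\mathcal{L}|$: it cites the known inversion relation $R_{\mathcal{L}} = \sum_{\mathcal{S} \subseteq \mathcal{L}} \Delta R_{\mathcal{S}}$ from the cooperative game theory literature, isolates $\Delta R_{\mathcal{L}} = R_{\mathcal{L}} - \sum_{\mathcal{S} \subsetneq \mathcal{L}} \Delta R_{\mathcal{S}}$, applies the induction hypothesis to the strict subsets, and then uses the observation that the walk sets $\{\mathcal{W} : \text{set}(\mathcal{W}) = \mathcal{S}\}$ for $\mathcal{S} \subseteq \mathcal{L}$ are pairwise disjoint and partition all walks in $\mathcal{L} \times \mathcal{L} \times \dots$. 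You instead give a direct, non-inductive computation: substitute $R_{\mathcal{S}} = \sum_{\mathcal{W} :\, \text{set}(\mathcal{W}) \subseteq \mathcal{S}} R_{\mathcal{W}}$ into $\Delta R_{\mathcal{L}}$, swap the order of summation, and evaluate the coefficient of each walk via the alternating-sum identity $\sum_{\mathcal{U} \subseteq \mathcal{F}} (-1)^{|\mathcal{U}|} = \mathbbm{1}(\mathcal{F} = \varnothing)$, so that only walks with $\text{set}(\mathcal{W}) = \mathcal{L}$ survive with coefficient $1$. Both proofs rest on the same two facts — the partition of walks by their underlying set and Möbius inversion on the Boolean lattice — but your version is self-contained (it proves the inversion inline rather than citing it) and avoids induction entirely, while the paper's version is shorter on the page precisely because it delegates the inversion step to a reference. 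Your bookkeeping is sound: walks have fixed length, so $\mathcal{W} \in \mathcal{S} \times \mathcal{S} \times \dots$ is indeed equivalent to $\text{set}(\mathcal{W}) \subseteq \mathcal{S}$, and the convention $R_{\varnothing} = 0$ (no walks on the empty set) matches how the paper handles its base case.
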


\begin{proof}
    We prove this by induction over $|\mathcal{L}|$.

    \textbf{Base Case:} For $|\mathcal{L}| = 1$, let $\mathcal{L} = \{J\}$. The left-hand side of Equation \eqref{eq:lemma_harsanyi_propagation_link} is:
    \begin{align*}
        \sum_{\mathcal{W}: \text{set}(\mathcal{W}) = \{J\}} R_\mathcal{W} = R_{JJJ \dots}
    \end{align*}
    The right-hand side is:
    \begin{align*}
        \sum_{\mathcal{S} \subseteq \{J\}} \left( -1 \right)^{|\{ J \} \setminus \mathcal{S}|} R_{\mathcal{S}} = -\underbrace{R_{\varnothing}}_{=0} + R_{\{J\}} = R_{JJJ \dots}
    \end{align*}
    Hence, the base case is satisfied.

    \textbf{Induction Step:} Assume the statement holds for $|\mathcal{L}| \leq l$. We want to show it for $|\mathcal{L}| = l+1$.

    It is known (see \cite{FUJIMOTO2006AxiomInterIndex}) that $R_{\mathcal{L}} = \sum_{\mathcal{S} \subseteq \mathcal{L}} \Delta R_{\mathcal{S}}$. From this, we can conclude:
    \begin{align*}
        \Delta R_{\mathcal{L}} = R_{\mathcal{L}} - \sum_{\mathcal{S} \subsetneq \mathcal{L}} \Delta R_{\mathcal{S}}
    \end{align*}
    For each $\mathcal{S} \subsetneq \mathcal{L}$, we have $|\mathcal{S}| \leq l$. By the induction hypothesis, we get:
    \begin{align}\label{eq:app_eq1}
        \Delta R_{\mathcal{L}} = R_{\mathcal{L}} - \sum_{\mathcal{S} \subsetneq \mathcal{L}} \ \ \sum_{\mathcal{W}:\, \text{set}(\mathcal{W}) = \mathcal{S}} R_\mathcal{W}
    \end{align}
    For $\mathcal{S} \subseteq \mathcal{L}$, the sets $\{ \mathcal{W}: \text{set}(\mathcal{W}) = \mathcal{S} \}$ are pairwise disjoint and their union specifies all walks in $\mathcal{L} \times \mathcal{L} \times \dots$. This implies:
    \begin{align*}
        R_{\mathcal{L}} = \sum_{\mathcal{S} \subseteq \mathcal{L}} \ \ \sum_{\mathcal{W}:\, \text{set}(\mathcal{W}) = \mathcal{S}} R_\mathcal{W}
    \end{align*}
    Inserting this into Equation \eqref{eq:app_eq1}, we directly obtain:
    \begin{align*}
        \Delta R_{\mathcal{L}} = \sum_{\mathcal{W}:\, \text{set}(\mathcal{W}) = \mathcal{L}} R_\mathcal{W}
    \end{align*}
    which is what we wanted to show.
\end{proof}

\section{Definition of the weighted correlation}\label{app:corr_def}
We define the weighted correlation $\text{corr}_{\bm{\eta}}(\bm{X},\bm{Y})$ for some weight vector $\bm{\eta} \in \mathbb{R}^d$ and some input vectors $\bm{X},\bm{Y} \in \mathbb{R}^d$. To do so, we first define the weighted covariance $\text{cov}_{\bm{\eta}}$ which is defined by
\begin{align*}
    \text{cov}_{\bm{\eta}}(\bm{X},\bm{Y}) = \mathbb{E}_{\bm{\eta}}[(\bm{X} - \mathbb{E}_{\bm{\eta}} [\bm{X}
    ])(
    \bm{X}- \mathbb{E}_{\bm{\eta}} [\bm{Y}
    ])]
\end{align*}
where $\mathbb{E}_{\bm{\eta}}[\bm{X}] := \frac{1}{\sum_{i}\eta_i}\sum_i x_i\eta_i$. Now 
\begin{align*}
    \text{corr}_{\bm{\eta}}(\bm{X},\bm{Y}) = \frac{\text{cov}_{\bm{\eta}}(\bm{X},\bm{Y})}{\sqrt{\text{Var}_{\bm{\eta}}(\bm{X}) \text{Var}_{\bm{\eta}}(\bm{Y})}}
\end{align*}
where $\text{Var}_{\bm{\eta}}(\bm{X}) := \text{cov}_{\bm{\eta}}(\bm{X},\bm{X})$.

\section{Details on the Transformer models }\label{app:transformer_additionals}
\subsection{On facial expression task}\label{app:cv_model_details}
In our vision experiments, we used a pretrained ViT-Base \cite{dosovitskiy2021ViT} model\footnote{\url{https://huggingface.co/dima806/facial_emotions_image_detection}}, which is finetuned on the FER-2013 \cite{goodfellow2013fer2013} dataset. The FER-2013 dataset is a facial expression dataset that is composed of 28,709 and 3,589 train and test examples, respectively. Each example in this dataset is a grayscale 48$\times$48 image. The images are categorized into seven emotion classes: `angry', `disgust', `fear', `happy', `sad', `surprise', and `neutral'. The overall accuracy of the pretrained model on this dataset is 90\%. Further details about the ViT-Base model is represented in Table~\ref{tab:model_details}.

\subsection{On the sentiment analysis task}\label{app:nlp_model_details}
In our NLP experiments, we used pretrained BERT-Base, Uncased \cite{devlin2019bert} models, which are finetuned on the SST-2 \cite{socher2013recursive}\footnote{\url{https://huggingface.co/textattack/bert-base-uncased-SST-2}} and IMDB \cite{movie_reviews2}\footnote{\url{https://huggingface.co/textattack/bert-base-uncased-imdb}} datasets. The classification accuracies on the SST-2 and IMDB datasets are 92.43\% and 91.90\%, respectively. We show further details about the BERT-Base model in Table~\ref{tab:model_details}.

\begin{table}[]
    \centering
    \begin{tabular}{c|cccc}
         \toprule
         \textbf{Model} & \textbf{Parameters} & \textbf{Layers} & \textbf{Hidden size $D$} & \textbf{Heads}\\
         \midrule
         BERT-Base, Uncased & 110M & 12 & 768 & 12\\
         ViT-Base & 86M & 12 & 768 & 12\\
         \bottomrule
    \end{tabular}
    \caption{Further details of Transformer models used in the NLP and vision experiments.}
    \label{tab:model_details}
\end{table}

\section{Additional results on the usage of SymbXAI framework in NLP} \label{app:add_nlp_eval}
\subsection{Movie Reviews evaluation}

In our experiments using the Movie Review dataset, we leverage a BERT-Base, Uncased model that has been pre-trained on the Movie Reviews dataset. Figures \ref{fig:nlp_imdb_neg} and \ref{fig:nlp_imdb_pos} display two exemplary reviews from this dataset, along with their corresponding explanations provided by our SymbXAI framework. We utilize the queries $q=\mathcal{S}$ and $q=\mathcal{S} \wedge \neg\overline{\mathcal{S}}$ to assess the model's ability to comprehend the sentiment expressed through these rationales in two different contextual settings. In these queries, we use $\mathcal{S}$ to denote a set of word indices that represent a subsentence. When using the query $q=\mathcal{S}$, the context is disregarded. However, when using $q=\mathcal{S} \wedge \neg\overline{\mathcal{S}}$, the entire context is taken into consideration. Semantically, the relevance for queries that consider the full context is more akin to the strategy of the human annotators, as they also take the complete context into account.

In Figure \ref{fig:nlp_imdb_neg}, we present a negative review accompanied by 8 pieces of supporting evidence that explain why the review is considered negative. By examining the relevance scores of the rationales, depicted as bar plots, it becomes evident that the model accurately comprehends the sentiment of all evidence pieces when considering the complete context. However, its performance in this regard diminishes when the context is absent. A similar observation can be made by studying Figure \ref{fig:nlp_imdb_pos}, which showcases a positive review along with five rationales. When the full context is taken into account, the model precisely identifies all rationales as positive. Conversely, in the absence of the complete context, the relevance scores of the second and fifth rationales nearly drop to zero.

\begin{figure*}[ht]
    \centering
    \includegraphics[width=.9\textwidth]{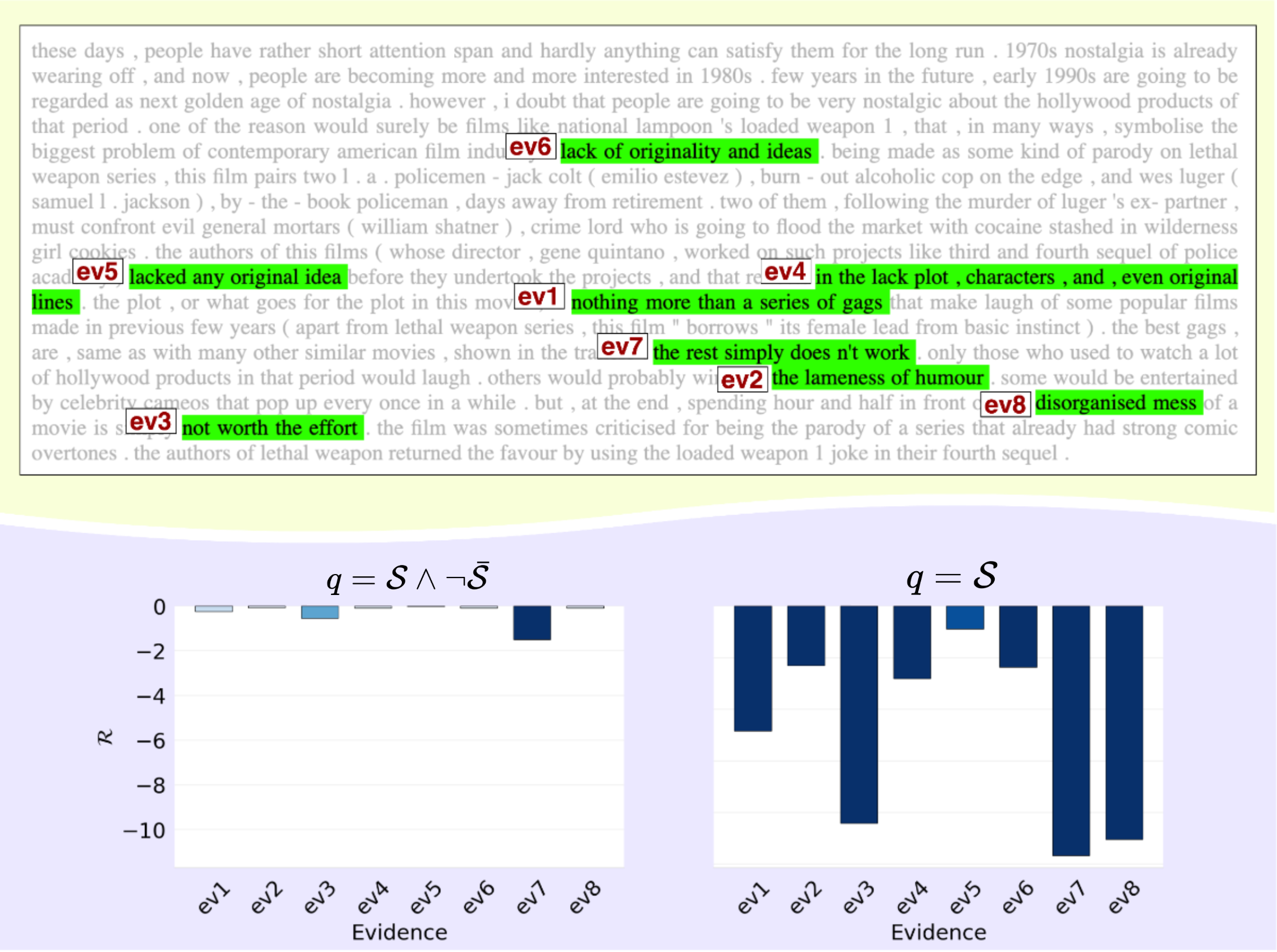}
    \caption{An exemplary negative review of the Movie Reviews dataset. In this example, the annotators provided eight pieces of evidence to demonstrate why this review is negative. To assess the model's ability to understand the negativity conveyed by the rationales, we used the queries
    $q=S$ and $q=S \wedge \neg\overline{\mathcal{S}}$. The relevance scores are visualized as bar plots, where negativity and positivity are distinguished by varying shades of blue and red, respectively.
    }
    \label{fig:nlp_imdb_neg}
\end{figure*}

\begin{figure*}[ht]
    \centering
    \includegraphics[width=.9\textwidth]{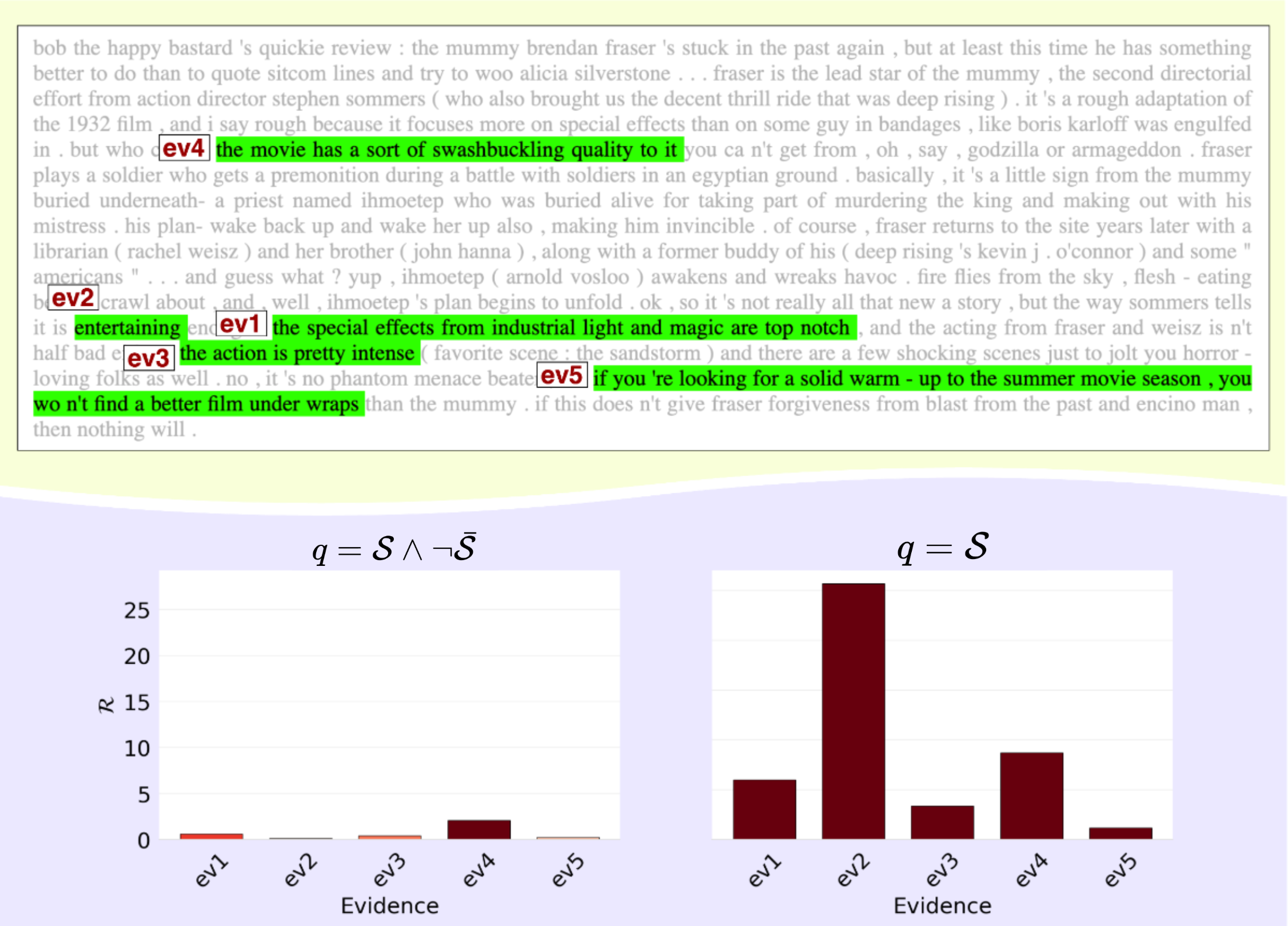}
    \caption{An exemplary positive review of the Movie Reviews dataset. In this particular instance, the annotators have presented five pieces of evidence supporting the positive nature of this review. To assess the model's ability to comprehend the positivity reflected in the rationales, we employed the queries
    $q=\mathcal{S}$ and $q=\mathcal{S} \wedge \neg\overline{\mathcal{S}}$. The relevance scores are visualized as bar plots, where negativity and positivity are distinguished by varying shades of blue and red, respectively.
    }
    \label{fig:nlp_imdb_pos}
\end{figure*}

\subsection{Additional to the perturbation analysis}\label{app:additional_perturbation_analysis}
We see in Figure \ref{fig:perturbation_cuves_sst} and \ref{fig:perturbation_cuves_imdb} the perturbation curves on the SST \cite{socher2013recursive} and Movie Reviews \cite{movie_reviews1, movie_reviews2} dataset, respectively. The perturbation curves that the order of the perturbation has a big effect on the curves. The curvature is approximately convex and concave for all methods on all datasets when considering the maximization and minimization task respectively. Particularly in the Movie Reviews dataset, the SymbXAI framework the perturbation curves have a visible margin to the first order methods. 
\begin{figure}
    \centering
    \includegraphics[width=\textwidth]{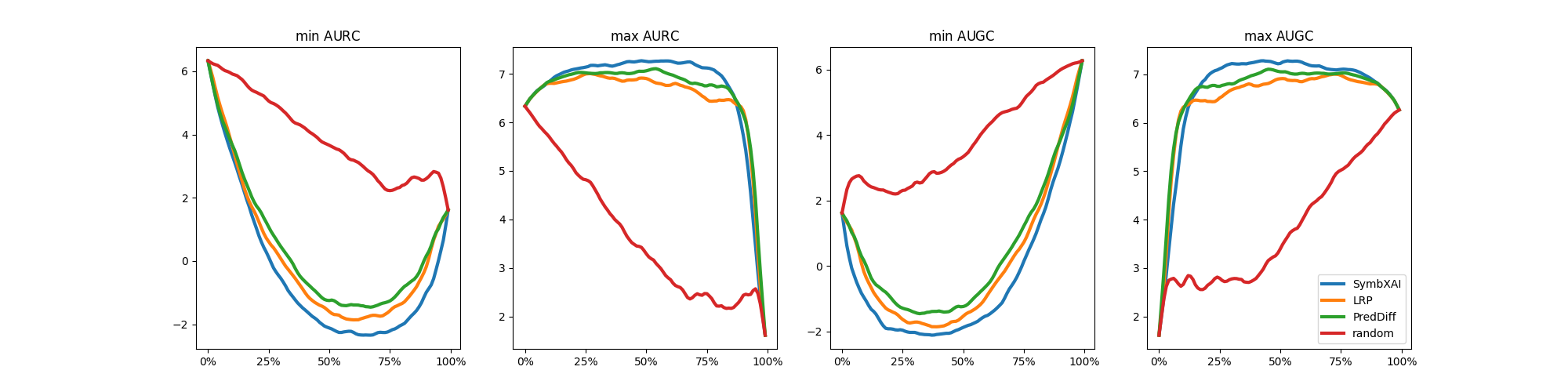}
    \caption{Perturbation curves for the different explanation methods on the SST dataset, averaged over 170 samples.}
    \label{fig:perturbation_cuves_sst}
\end{figure}

\begin{figure}
    \centering
    \includegraphics[width=\textwidth]{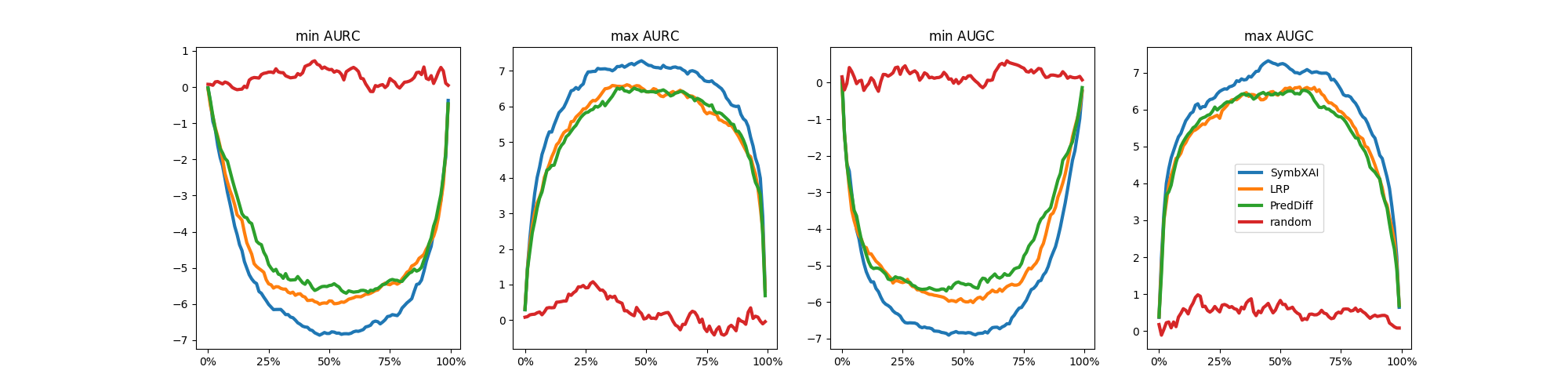}
    \caption{Perturbation curves for the different explanation methods on the Movie Reviews dataset, averaged over 97 samples.}
    \label{fig:perturbation_cuves_imdb}
\end{figure}

\section{ Details about quantum chemistry model architecture, training and performance}\label{app:qc_train_data}

For the analysis of the proton transfer in malondialdehyde in Section~\ref{sec:mda_traj}, we have trained SchNet~\cite{schutt2017schnet, schutt2018schnet} on the reference data introduced by Schütt~et.~al.~\cite{schuett2019schnorb} using the software package SchNetPack~\cite{schuett2018schnetpack, schutt2023schnetpack}. The model exhibits three interaction blocks, a feature size of 128, and a distance expansion by 20 Gaussian radial basis functions. The continuous filter convolutions are performed for interatomic distances up to a cutoff radius of \SI{5}{\angstrom}.

The loss function of our optimization problem for training reads
\begin{equation}
    \mathcal{L} = \alpha \vert \epsilon - \hat{\epsilon}\vert^2 + 
    \frac{\beta}{3N} \Vert \mathbf{F} - \hat{\mathbf{F}}(\hat{\epsilon})\Vert^2~.
\end{equation}
with the trade-off factors $\alpha$, $\beta$, the predicted energy value $\hat{\epsilon}$, the total number of atoms $N$, and the predicted forces $\hat{\mathbf{F}}(\hat{\epsilon})$. Here we choose $\alpha=0.1$ and $\beta=0.9$. We utilize the AdamW~\cite{kingma2014adam, loshchilov2017decoupled} optimizer provided by PyTorch with a weight decay of $0.01$. We randomly split the entire dataset into training set, validation set and test set. Training and validation set comprise $85\%$ and $10\%$ of the dataset respectively. The remaining samples are assigned to the test set. Starting from $a_\mathrm{lr}=5\cdot 10^{-4}$, we gradually adapt the learning rate using a learning rate scheduler with a patience of $80$ epochs and a threshold of $0.0001$; i.e., each time the validation loss did not decrease more than the threshold after $80$ epochs, the learning rate is reduced by a factor of $0.8$. Eventually, when the training has not improved the model prediction error on the validation set for more than 1000 epochs the training is stopped. The training has converged after $7755$ epochs. The mean absolute prediction error on the left out test samples is \SI{0.35}{\milli\electronvolt} for the energy and \SI{2.16}{\milli\electronvolt/\angstrom} for the forces.

\section{Details on the Lennard-Jones Potential} \label{app:len_jones}
Lennard-Jones-Potential $V$ is given by
\begin{align*}
   V(u) = V_0  \left[(\frac{u_0}{u})^{12} - 2  (\frac{u_0}{u})^6 \right]
\end{align*}
with $u_0$ begin the equilibrium state distance of the considered bond, and $V_0$ is the depth of the potential well, which is in our case always fixed to 1. The equilibrium distance $u_0$ we obtain by performing an MD simulation with the ML model, towards the zero potential. The resulting MDA molecule is considered to be in equilibrium state. We then use the distance between the bonded O-H atoms as the reference value for $u_0$ which is $1.04$ Å.

\end{document}